\titleformat{\section}[hang]
{\large\bfseries}
{\thesection.}{0.5em}{}
\titleformat{\subsection}[hang]
{\bfseries}
{\thesubsection.}{0.5em}{}
\definecolor{darkgreen}{rgb}{0,0.6,0}
\newcommand{\genComment}[2]{\ifnum\comments=1{\color{#1}{\textsf{\footnotesize{#2}}}}\fi}
\newcommand{\E}{\mathbb{E}}
\newcommand{\prob}{\mathbb{P}}
\newcommand{\Tmix}{T^\text{mix}}
\newcommand{\bigO}{\mathcal{O}}
\newcommand{\Pactive}{P^\text{active}}
\newcommand{\Ppassive}{P^\text{passive}}
\newcommand{\thetaStar}{\theta^{\star}}
\newcommand{\piStar}{\pi^{\star}}
\newcommand{\sumtt}{\sum_{t=1}^{T}}
\newcommand{\ind}{\mathbbm{1}}
\newcommand{\Hc}{\mathcal{H}}
\newcommand{\phat}{\hat{p}}
\newtheorem{theorem}{Theorem}
\newtheorem{condition}{Condition}
\newtheorem{lemma}[theorem]{Lemma}
\newtheorem{proposition}[theorem]{Proposition}
\newtheorem{assumption}{Assumption}
\title{Thompson Sampling in Non-Episodic Restless Bandits}
\author{
  Young Hun Jung\\
  University of Michigan\\
  \texttt{yhjung@umich.edu} \\
  \and
  Marc Abeille\\
  Criteo AI Lab\\
  \texttt{m.abeille@criteo.com}\\
  \and
  Ambuj Tewari\\
  University of Michigan\\
  \texttt{tewaria@umich.edu} \\
}
\begin{document}
\twocolumn[
\maketitle]
\begin{abstract}
\textit{Restless bandit} problems assume time-varying reward distributions of the arms, 
which adds flexibility to the model
but makes the analysis more challenging. 
We study learning algorithms over the unknown reward distributions and prove a sub-linear, $\bigO(\sqrt{T}\log T)$, regret bound for a variant of Thompson sampling. 
Our analysis applies in the \textit{infinite time horizon} setting, 
resolving the open question raised by
\citet{jung2019regret} whose analysis is limited to the episodic case.
We adopt their policy mapping framework, which allows our algorithm to be efficient and simultaneously keeps the regret meaningful. 
Our algorithm adapts the TSDE algorithm of \citet{ouyang2017learning} in a non-trivial manner to account for the special structure of restless bandits. 
We test our algorithm on a simulated dynamic channel access problem with several policy mappings, and the empirical regrets agree with the theoretical bound regardless of the choice of the policy mapping.
\end{abstract}


\section{INTRODUCTION}
In contrast to the classical multi-armed bandits (MABs), 
\textit{restless multi-armed bandits} (RMABs), introduced by \citet{whittle1988restless}, 
assume reward distributions that change along with the time. 
Due to their non-stationary nature, 
RMABs can model more complicated systems and thus get more attention in practice and theoretical literature.
In practice, they are used in a wide spectrum of applications including \textit{sensor management} (Chp. 7 in \citet{hero2007foundations} and Chp. 5 in \citet{biglieri2013principles}), \textit{dynamic channel access} problems \citep{liu2011logarithmic,liu2013learning}, and \textit{online recommendation systems} \citep{meshram2017restless}.
Theoretically, a variety of research communities have contributed to the literature on restless bandits, e.g., \textit{complexity theory} \citep{blondel2000survey},
\textit{applied probability} \citep{weber1990index},
and \textit{optimization} \citep{bertsimas2000restless}.


In this setting, there are $K$  independent arms indexed by $k \in [K]$.\footnote{For an integer $n$, we denote the set $\{1, \cdots, n\}$ by $[n]$.} Each arm is characterized by an internal state $s_k \in S_k$ which evolves in a \textit{Markovian fashion} according to the (possibly distinct) transition matrices $\Pactive_k$ and $\Ppassive_k$ depending on whether the arm is pulled (i.e., \textit{active}) or not (i.e., \textit{passive}). 
The reward of pulling an arm $k$ depends on its state $s^t_k$, which brings the non-stationarity.

We aggregate the transition matrices as
$\theta \in \Theta$ and consider this problem as a \textit{Reinforcement Learning} problem where $\theta$ is unknown to the learner. 
This problem has a complication in defining the baseline competitor against which the learner competes. 
It is not guaranteed, without additional assumptions, that the optimal policy exists,
and even if it exists, 
\citet{papadimitriou1999complexity} show that it is generally PSPACE hard to compute the optimal policy.

Researchers take different paths to tackle this challenge. 
Some define the regret using a simpler policy, which can be easily computed (e.g., see \citet{tekin2012online, liu2013learning}). 
They compare the learner's reward to a policy that pulls a fixed set of arms every round. 
Their algorithm is efficient and has a strong regret guarantee, $\bigO(\log T)$, 
but this baseline policy is known to be weak in the RMAB setting, which makes the regret less meaningful. 
Our empirical results in Sec. \ref{sec:experiments} also show the weakness of this policy.
Another breakthrough is made by \citet{ortner2012regret} who show a sub-linear regret bound against the optimal policy. 
However, they ignore the computational burden of their algorithm. 

\citet{jung2019regret} propose another interesting direction in that they introduce a deterministic \textit{policy mapping} $\mu$. 
It takes the system parameter $\theta$ as an input and outputs a deterministic stationary policy $\pi = \mu(\theta)$. 
Then the learner competes against the policy 
$\piStar = \mu(\thetaStar)$, where $\thetaStar$ denotes the true system. 
This framework is general enough to include the best fixed arm policy and the optimal policy that are mentioned earlier. 
That being said, one can achieve an efficient algorithm by choosing an efficient mapping $\mu$ or 
make the regret more meaningful with a stronger policy. 
In fact, there are different lines of work (e.g., \citet{whittle1988restless, liu2010indexability, meshram2017restless}) that study an efficient way, namely the \textit{Whittle index policy}, to approximate the optimal algorithm. 
Using this policy as a mapping, one can obtain an efficient algorithm with a meaningful regret simultaneously.

In this paper, we also adopt the policy mapping from \citet{jung2019regret} and answer an open question raised by them. 
Specifically, they prove the regret bound of \textit{Thompson sampling} in the episodic restless bandits where the system periodically resets. 
From the episodic assumption, the problem boils down to a \textit{finite horizon} problem, which makes the analysis simpler. 
However, there are many cases (e.g., online recommendations) where the periodic reset is not natural, and they mention the analysis of a learning algorithm in the \textit{infinite time horizon} as an open question.

We identify explicit conditions in Sec. \ref{sec:planning} that ensure the \textit{Bellman} equation of the entire Markov decision process (MDP). 
It is hard to analyze the vanilla Thompson sampling in this setting, and we adapt \textit{Thompson sampling with dynamic episodes} (TSDE) of \citet{ouyang2017learning} in the fully observable MDP. 
TSDE (Algorithm \ref{alg:TSDE}) has one deterministic and one random termination conditions and switches to a new episode if one of these is met. 
At the beginning of each episode, TSDE draws a system parameter using the posterior distribution from which it computes a policy and runs this policy throughout the episode. 
We theoretically prove a sub-linear regret bound of this algorithm and empirically test it on a simulated dynamic channel access problem.

\subsection{Main Result}
As mentioned earlier, our learner competes against the policy $\piStar = \mu(\thetaStar)$ without the knowledge of $\thetaStar \in \Theta$.
We denote the \textit{average long term reward} of $\piStar$ on the system $\thetaStar$ by $J_{\piStar}(\thetaStar)$, which is a well-defined notion under certain assumptions that will be discussed later.
Then we define the \textit{frequentist regret} by
\begin{equation}
\label{eq:freq.regret.def}
R(T;\thetaStar) 
= J_{\piStar}(\thetaStar)\cdot T - \E_{\thetaStar}\sumtt r_t,
\end{equation}
where $r_t$ is the learner's reward at time $t$. 
We focus on bounding the following \textit{Bayesian regret}
\begin{equation}
\label{eq:bayes.regret.def}
    BR(T) = \E_{\thetaStar \sim Q} R(T;\thetaStar),
\end{equation}
where $Q$ is a prior distribution over $\Theta$ and is known to the learner. 
Our main result is to bound the Bayesian regret of TSDE. 
\begin{theorem}
\label{thm:main}
The Bayesian regret of TSDE satisfies the following bound
\begin{align*}
BR(T) = \bigO(\sqrt{T} \log T),
\end{align*}
where the exact upper bound appears later in Sec. \ref{sec:regret}. 
\end{theorem}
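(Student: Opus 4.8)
The plan is to adapt the dynamic-episode analysis of \citet{ouyang2017learning} to the restless setting, exploiting the factored structure of $\theta$ (the per-arm matrices $\Pactive_k, \Ppassive_k$) together with the ergodicity conditions that guarantee the Bellman equation. Write the horizon as a union of episodes; let $N_T$ be the number of episodes and let $t_k$ denote the start of episode $k$, each a stopping time with respect to the history $\mathcal{F}_{t_k}$. The first reduction is the posterior-sampling (probability-matching) identity: because $\theta_{t_k}$ is drawn from the posterior given $\mathcal{F}_{t_k}$, it is equal in conditional distribution to $\thetaStar$, so $\E[J_{\piStar}(\thetaStar)] = \E[J_{\pi_{t_k}}(\theta_{t_k})]$, where $\pi_{t_k} = \mu(\theta_{t_k})$ is the policy actually run during episode $k$. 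This lets me rewrite the benchmark term $J_{\piStar}(\thetaStar)\cdot T$ episode-by-episode in terms of the average reward $J(\theta_{t_k})$ of the sampled system that the learner follows.

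Second, I would invoke the Bellman (Poisson) equation guaranteed by the planning conditions: for the sampled system $\theta_{t_k}$ under $\pi_{t_k}$ there is a bias function $v(\cdot\,;\theta_{t_k})$ with $J(\theta_{t_k}) + v(s;\theta_{t_k}) = r(s) + (P_{\theta_{t_k}} v)(s;\theta_{t_k})$ along the run. Substituting this into the per-episode sum of $J(\theta_{t_k})$ and telescoping yields the standard three-term decomposition of the Bayesian regret: (i) a telescoping term $\sum_k \big( v(s_{t_{k+1}};\theta_{t_k}) - v(s_{t_k};\theta_{t_k}) \big)$, bounded by $\mathrm{span}(v)\cdot\E[N_T]$; (ii) a term $\sumtt \big( (P_{\theta_{t_k}} v)(s_t) - v(s_{t+1}) \big)$, which splits into a mean-zero martingale difference (under the true dynamics) of zero expectation plus a model-error term $\sumtt (P_{\theta_{t_k}} - P_{\thetaStar})v$ evaluated at the visited pair; and (iii) a lower-order term from the deterministic episode-length trigger.

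Third, I would bound the pieces. For the episode count, the two TSDE triggers — the length-doubling deterministic rule and the visit-count-doubling random rule, here applied to \emph{per-arm} observation counts — give $\E[N_T] = \bigO(\sqrt{D T \log T})$, where $D = \sum_k |S_k|$ is the \emph{sum} (not the product) of per-arm state-space sizes, since each round reveals one transition per arm and the posterior concentrates on each $\Pactive_k,\Ppassive_k$ separately. The model-error term in (ii) is controlled by $\mathrm{span}(v)$ times $\sumtt \|P_{\theta_{t_k}}(\cdot\mid s_t) - P_{\thetaStar}(\cdot\mid s_t)\|_1$; factoring this $\ell_1$ distance across arms and using $\sumtt 1/\sqrt{n_t} = \bigO(\sqrt{T})$ for the per-arm visit counts $n_t$ produces the $\sqrt{T}\log T$ scaling. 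Summing (i)--(iii) gives the claimed bound.

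The hardest part is twofold, and is exactly where the restless structure bites. First, the bias function $v$ need not have a bounded span a priori, since the induced chain lives on the product state space; I must use the ergodicity/mixing conditions from the planning section to show $\mathrm{span}(v)$ is controlled (expectedly by a mixing quantity such as $\Tmix$) uniformly over the support of the posterior, since the whole decomposition is vacuous without a finite span. Second, the probability-matching identity and the martingale argument both require that the episode boundaries are genuine stopping times even though the random trigger depends on accumulated visit counts; I must verify this measurability carefully and check that the factored per-arm concentration is compatible with conditioning on $\mathcal{F}_{t_k}$, so that replacing the product-state confidence width by the sum-of-arms width does not break the posterior-sampling step.
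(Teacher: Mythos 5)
Your skeleton (probability matching, Bellman/Poisson decomposition into telescoping + martingale + model-error terms, episode count via the two TSDE triggers, factored per-arm confidence widths) matches the paper's proof. However, two of the things you flag as "the hardest part" are handled in the paper in a materially different way, and your plan for them would not go through as stated.

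First, the span of the bias function. You propose to \emph{derive} $\mathrm{span}(v)$ from the ergodicity/mixing conditions, "expectedly by a mixing quantity such as $\Tmix$." The paper does not do this, and for good reason: the mixing assumption (Asm.~\ref{asm:mixing}) concerns only the \emph{passive} per-arm chains, while the bias function lives on the induced meta-state chain under an \emph{arbitrary} policy $\pi_\theta = \mu(\theta)$; mixing of the passive chains alone does not control the span of $v$ for a badly-behaved policy mapping. The paper instead \emph{assumes} bounded span (Asm.~\ref{asm:bounded.span}, via the discounted-value-function condition, Cond.~\ref{cond:explicit.bellman}) and proves in Prop.~\ref{prop:cond.1.2.bellman} that this yields the Bellman equation with a uniform span bound $H$, following a vanishing-discount argument adapted from Puterman. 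If you insist on proving the span bound from mixing, your argument will fail for general $\mu$; the correct move is to make it an assumption on the parameter/policy pair.

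Second, the countably infinite state space, which is where the mixing assumption actually enters. The meta-state $\xi_t$ records, for each arm, the last observed state $\sigma_k$ and the elapsed time $n_k$, and $n_k$ is unbounded. Your concentration plan ("per-arm visit counts $n_t$," "$\sum_t 1/\sqrt{n_t} = \bigO(\sqrt{T})$") implicitly treats the index set of things to estimate as finite, but there are infinitely many tuples $(k,s,n)$; a union bound or a visit-count argument over this set does not close. The paper's key device is to \emph{aggregate} all counts with $n \ge \Tmix$ into a single bucket (the truncated counter $N_t(k,s,n)$), justified by Prop.~\ref{prop:mixing.times}: after $\Tmix$ passive steps the $n$-step distributions are within $1/T$ of each other, so the aggregated bucket can be treated as one cell at the cost of a Hoeffding-type correction inside the Weissman bound (Lemma~\ref{lemma:confidenceProb}). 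This truncation is what makes the effective number of cells $\sum_k |S_k| \cdot \Tmix$ finite, and it is needed in \emph{both} places your proof uses counts: the episode bound (Lemma~\ref{lemma:numEp}, which carries the $\Tmix$ factor your $\E[N_T] = \bigO(\sqrt{DT\log T})$ omits) and the on-policy error bound (Lemma~\ref{lemma:onpolicyerror}). Relatedly, note that the quantities being estimated are the delayed observation distributions $p_\theta(\cdot;k,s,n)$, not the matrices $\Pactive_k,\Ppassive_k$ themselves, and that only the $N$ active arms (not every arm) generate an observation each round. Without the truncation idea, your steps (iii) and the model-error bound in (ii) cannot be completed.
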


\section{PRELIMINARIES}
We begin by formally defining our problem setting.

\subsection{Problem Setting}


As stated earlier, we focus on a Bayesian framework where the true system, denoted as $\thetaStar$, is a random object that is drawn from a prior distribution $Q$ before the interaction with the system begins. In line with~\citet{ouyang2017learning}, we assume that the prior is known to the learner, and we denote its support by $\Theta$.

At each time step $t$, the learner selects $N$ arms from $[K]$ which become \textit{active} while the others remain \textit{passive}. Following~\citet{ortner2012regret}, we impose the \textit{passive} Markov chains to be irreducible and aperiodic. As a result, we can associated with each arm $k$ the \textit{mixing time} of $\Ppassive_k$. Let $p^t_k(s)$ be the distributions of the state $s_k$ of arm $k$ starting from a state $s$ and remaining passive for $t$ steps, and let $p_k$ be the stationary distribution. Then, we define
\begin{equation}
    \Tmix_k (\epsilon)
    = 
    \inf \Big\{ t\geq 1 \text{ s.t.} \max_{s \in S_k}\|p^t_k(s) - p_k\|_1 \leq \epsilon \Big\}, 
    \label{eq:Tmix.k}
\end{equation}
and work under the assumption of known mixing time\footnote{The knowledge of $T^{mix}(\frac{1}{4})$ maybe relaxed to the knowledge of an upper bound of it, without affecting our result.}.
\begin{assumption}[Mixing times]\label{asm:mixing}
For all $k \in [K]$ and $\theta \in \Theta$, $\Ppassive_k$ is irreducible and aperiodic, and $\Tmix(\frac{1}{4}) := \max_{k, \theta} T^{mix}_k(\frac{1}{4})$ is known to the learner.
\end{assumption}

The learner's action at time $t $ is written as $A_t \in \{0,1\}^K$, $1$ indicating the active action. For all the chosen arms, the learner observes the state $s^t_k$ and receives a reward $r_k(s^t_k)$, where the rewards are deterministic \textit{known} functions of the state $r_k : S_k \rightarrow [0, 1]$ for all $k \in [K]$. The objective of the learner is to choose the best sequence of arms, given the history (state and actions) observed so far, which maximizes the long term average reward 
\begin{equation}
    \underset{t\rightarrow T}{\lim\sup} \frac{1}{T} \mathbb{E}\left( \sum_{t=1}^T \sum_{k:A_{t,k} = 1} r_k(s^t_k) \right).
    \label{eq:informal.partial.objective}
\end{equation}

\subsection{From POMDP to MDP}
\label{sec:POMDP}

By nature, the RMAB problem we consider is a \textit{partially observable Markov decision process} (POMDP) since the arms evolve in a Markovian fashion and we only observe the states of the active arms.
Nonetheless, one can turn this POMDP into a fully observable \textit{Markov decision process} (MDP) by introducing belief states, i.e., distributions over states given the history. 
Notice that the number of belief states become therefore (countably) infinite even if the original problem is finite. Following~\citet{ortner2012regret} and~\citet{jung2019regret}, we track the history introducing a \textit{meta-state} $\xi_t$, fully observed at time $t$, from which we can reconstruct the belief states. 
Formally, we define $\xi_t=(\xi_t^s,\xi_t^n)$ where 
\begin{align*}
\xi^s_t = (\sigma^t_1, \cdots, \sigma^t_K) 
\text{ and }
\xi^n_t = (n^t_1, \cdots, n^t_K).
\end{align*}
For each $k \in [K]$, $\sigma^t_k$ is the last observation of the state process $\{s^t_k\}_{t\geq 1}$ before time $t$, $n^t_k$ is time elapsed from this last observation. Further, it is clear that $\{\xi_t\}_{t\geq 1}$ is a Markov process on a countably infinite state space $S$. As a result, the maximization of the partially observable problem in Eq.~\ref{eq:informal.partial.objective} is equivalent to the maximization of the fully observable one
\begin{equation}
    \underset{t\rightarrow T}{\lim\sup} \frac{1}{T} \mathbb{E} \Big( \sum_{t=1}^T r_{\thetaStar}(\xi_t,A_t)\Big),
    \label{eq:informal.fully.objective}
\end{equation}
where 
\begin{equation}
r_{\theta}(\xi_t, A_t)
= \sum_{k: A_{t, k} = 1} \E_{\theta} [r_k (s^t_k) | \xi_t, A_t].
\label{eq:reward.fully.def}
\end{equation}
We use the notation $\E_\theta$ and $r_\theta$ to emphasize that the random behavior of $s^t_k$ is governed by the system $\theta$.
We also assume that the initial state $\xi_1$ is known to the learner.

\subsection{Policy Mapping}
\label{sec:policy.mapping}

To maximize the long term average reward in Eq.~\ref{eq:informal.fully.objective},~\citet{ortner2012regret} construct a finite approximation of the countable MDP which allows them, under a bounded diameter assumption, to compute $\epsilon$-optimal policy for a given $\theta$. However, their computational complexity is prohibitive for practical applications. As explained in the introduction, we follow a different approach, in line with~\citet{jung2019regret}, which achieves both tractability and optimality through the use of a policy mapping $\mu : \Theta \rightarrow \Pi$.
It associates each parameter $\theta$ with a stationary deterministic policy $\pi_\theta$.
To ensure the well-posedness of the long-term average reward, we impose the following assumption on $\mu$.
\begin{assumption}[Bounded span]\label{asm:bounded.span}
For all $\theta \in \Theta$, the parameter/policy pair $(\theta,\pi_{\theta})$ satisfies Cond.~\ref{cond:explicit.bellman}.
\end{assumption}
Cond.~\ref{cond:explicit.bellman} is formalized and discussed in detail in Sec.~\ref{sec:planning}. Asm.~\ref{asm:bounded.span} should be understood as the counterpart of the bounded diameter assumption made by~\citet{ortner2012regret} or the bounded span assumption by~\citet{ouyang2017learning} adapted to our policy mapping approach.

\section{ALGORITHM}
\label{sec:alg}

Algorithm \ref{alg:TSDE} builds on \textit{Thompson Sampling with Dynamic Episodes} (TSDE) of \citet{ouyang2017learning}. 
At the beginning of each episode $i$, we draw system parameters $\theta_i$ from the latest posterior $Q_{t_i}$, compute the policy $\pi_i = \mu(\theta_i)$, and run $\pi_i$ throughout the episode. 
We proceed to the next episode if one of the termination conditions, which will appear shortly, occurs.

\begin{algorithm}[ht]
	\begin{algorithmic}[1]
	    \STATE \textbf{Input} prior $Q$, policy mapping $\mu$,
	    \STATE \textcolor{white}{\textbf{Input}} mixing time $\Tmix$, initial state $\xi_1$
	    \STATE \textbf{Initialize} $Q_1 = Q$, $t = 1$, $t_0 = 1$
	    \FOR {episodes $i = 1, 2, \cdots$}
	    \STATE Set $t_i = t$ and $T_{i-1} = t_i - t_{i-1}$
	    \STATE Draw $\theta_i \sim Q_t$ and compute $\pi_i = \mu(\theta_i)$
	    \WHILE{not termination condition (Eq. \ref{eq:terminationCond})}
	    \STATE Select active arms $A_t = \pi_i(\xi_t)$
	    \STATE Observe states $s^t_k$ for active arms $k$
	    \STATE Update $\xi_t$ to $\xi_{t+1}$ and $Q_t$ to $Q_{t+1}$
	    \STATE Increment $t = t+1$
	    \ENDWHILE
	    \ENDFOR
	\end{algorithmic}
	\caption{TSDE in restless bandits}
	\label{alg:TSDE}
\end{algorithm}

Before introducing the termination conditions, let us discuss Asm. \ref{asm:mixing}.
As pointed out in~\citet[Eq. 1]{ortner2012regret},
we have 
$\Tmix_k(\epsilon) 
\leq 
\log_2(1/\epsilon) \Tmix(\frac{1}{4})$ for all $k \in [K]$ and $\epsilon > 0$.
As we want the accuracy of $\frac{1}{T}$, which will not affect the regret significantly, we define 
$$
\Tmix 
:= (\log_2 T)\Tmix(\frac{1}{4})
\ge \Tmix(\frac{1}{T}).
$$
Here we assume the time horizon $T$ is known. 
When it is unknown, 
we can use the \textit{doubling trick}
and get the same regret bound up to a constant factor. 
We remark that $\Tmix = \bigO(\log T)$.

For tuples $(k, s, n)$, we define
\begin{align*}
\tilde{N}_t(k, s, n) 
&= \sum_{\tau=1}^{t-1} 
    \ind (A_{\tau, k} = 1, \sigma^t_k = s, n^t_k = n).\\
\end{align*}


Then we introduce the truncated counter 
\begin{align*}
N_t(k, s, n) =
\begin{cases}
    \tilde{N}_t(k, s, n) &\text{ if } n < \Tmix \\
    \sum_{n^\prime \ge \Tmix}\tilde{N}_t(k, s, n^\prime) &\text{ if } n \geq \Tmix 
\end{cases}.
\end{align*}
The intuition behind this aggregation is that 
the distribution of the states remains similar for sufficiently large $n$, thanks to the mixing time. 
As a result, the possible number of tuples $(k, s, n)$ with $n \leq \Tmix $ is at most $ \sum_k |S_k| \cdot \Tmix$.
When there is no ambiguity, 
we write 
$(k, s, n) = \zeta$
for brevity and let $Z$ be the set of all possible values of $\zeta$.

We \textit{terminate} the episode $i$ if 
\begin{equation}
\label{eq:terminationCond}
t > t_i + T_{i-1}
\text{ or }
N_t (\zeta) > 2 N_{t_k}(\zeta)
\text{ for some } 
\zeta \in Z,
\end{equation}
where $T_i$ represents the length of episode $i$.
This quantity can differ for each episode. 
This is where the name dynamic episodes comes from. 
In addition, the second condition makes the quantity $T_i$ random, and one recovers the well-known \textit{lazy update scheme} from this condition~\citep{jaksch2010near,ouyang2017learning}. The underlying intuition is that one should update the policy only after gathering enough additional information over the unknown Markov process. 

\section{PLANNING PROBLEM}
\label{sec:planning}


The MDP reformulation in Sec.~\ref{sec:POMDP} reduces the objective to maximizing Eq.~\ref{eq:informal.fully.objective}. However, we inherit from the original POMDP problem severe difficulties in the \textit{planning} task. 
For example, given the parametrization $\thetaStar$, how to efficiently compute a \textit{stationary} and \textit{deterministic} policy $\pi$ (i.e., $\pi$ maps a state $\xi$ to an  action $A$ in a deterministic manner) that maximizes the average long term reward, and more importantly, does such policy exist? Unfortunately, the average reward POMDP problem is not well understood in contrast with the finite state average reward MDP. In particular, it is known~\citep{bertsekas1995dynamic} that the long term average reward may not be constant w.r.t.\ the initial state. 
Even when this holds,
\textbf{1)} The Bellman equation may not have a solution. 
\textbf{2)} Value Iteration may fail to converge to the optimal average reward.
\textbf{3)} There may not exist an optimal policy, stationary or non-stationary. 
\textbf{4)} Finally, even when the optimal policy exists,~\citet{papadimitriou1999complexity} show that it is generally PSPACE hard to compute it.

To overcome this difficulty,~\citet{ortner2012regret} perform a state aggregation to reduce the countably infinite MDP into a finite one, which under the bounded diameter assumption can be solved using standard techniques. Although this reduction allows them to compute an $\epsilon$-optimal policy, the computational complexity of their approach remains prohibitive for practical application. 
On the other hand, a significant amount of work has been done to design \textit{good} policies in the RMAB framework, for instance the best fixed arm policy (that is optimal in the classical MAB framework), the myopic policy~\citep{javidi2008optimality}, or the Whittle index policy~\citep{whittle1988restless,liu2010indexability}. In line with~\citet{jung2019regret}, we leverage this prior knowledge following an alternative approach that consists in competing with the best policy within some known class of policies. 
Formally, let $\Pi$ be the set of stationary deterministic policies, and we assume a policy mapping $\mu :\Theta \rightarrow \Pi$ is given and known to the learner. 
This set of deterministic mappings is quite rich in that the optimal policy can be also represented when it exists. If one cares more about the efficiency, one can use some efficient mappings while there is a trade-off of weakening the competitor.

Finally, in contrast to~\citet{ortner2012regret}, our approach does not turn the countable MDP problem into a finite one. Hence, it requires a further condition on the parameter space $\Theta$ and the policy mapping $\mu$ for the average reward criterion in Eq.~\ref{eq:informal.fully.objective} to be well-posed. More precisely, we expect the average reward to be independent of the initial state and associated to a Bellman equation, with a bias function of a bounded span. For a given $\theta \in \Theta$ and associated policy $\pi_\theta = \mu(\theta)$, we introduce the following conditions.
\begin{condition}
\label{cond:implicit.bellman}
Let $\mathcal{V}$ be the set of bounded span real-valued function. There exists $v \in \mathcal{V}$ and a constant $g$ which satisfy for all $\xi \in S$,
\begin{equation*}
    g + v(\xi) = r_\theta(\xi,\pi_\theta(\xi)) + \E_{\theta}[ 
        v(\xi^\prime) |\xi, \pi_\theta(\xi) 
    ],
\end{equation*}
where the expectation is taken over $\xi^\prime$ evolving from $\xi$ given the action $\pi_\theta(\xi) $ and the system $\theta$.
\end{condition}
Under Cond.~\ref{cond:implicit.bellman}, it is known (see Prop.~\ref{prop:cond.1.2.bellman}) that the long term average reward of $\pi_\theta$ is well-defined (the $\lim\sup$ reduces to the standard $\lim$), independent of the initial state $\xi_1$, and associated with the Bellman equation with a bounded span bias function. 
However, Cond.~\ref{cond:implicit.bellman} is implicit and uneasy to assert as it relies on the existence result\footnote{If a function $v$ satisfies Cond.~\ref{cond:implicit.bellman}, it is not unique since adding any constant to $v$ still meet the requirement.}. This motivates the alternative condition, known as the \textit{discounted approach} in the literature. 
\begin{condition}
For any $\beta \in (0,1)$, let $v_{\pi_\theta}^\beta$ be the discounted infinite horizon value function defined as
\begin{equation*}
    v^\beta_{\pi_\theta}(\xi) = \mathbb{E}_\theta\left(\sum_{t=1}^{\infty} \beta^t r_\theta(\xi_t,\pi_\theta(\xi_t)) | \xi_1 = \xi \right).
\end{equation*}
Then $\sup_{(\xi,\xi^\prime)\in S^2} v_{\pi_\theta}^\beta(\xi) - v_{\pi_\theta}^\beta(\xi^\prime)$ is uniformly bounded for all $\beta \in (0,1)$.
\label{cond:explicit.bellman}
\end{condition}
The introduction of the discount factor $\beta \in (0,1)$ guarantees that $v^\beta_{\pi_\theta}$ is a well-defined function, and hence Cond.~\ref{cond:explicit.bellman} is reduced to assert the uniform boundedness of a known family of function. Further, it also guarantees that the long term average reward is well-defined as it implies Cond.~\ref{cond:implicit.bellman}.

\begin{proposition}
\label{prop:cond.1.2.bellman}
Let $\theta \in \Theta$ be a system parameter and $\pi_\theta = \mu(\theta)$ be a policy. Then the followings hold. 
\begin{itemize}[leftmargin=*]
    \item Cond.~\ref{cond:explicit.bellman} implies Cond.~\ref{cond:implicit.bellman}.
    \item Under Cond.~\ref{cond:implicit.bellman} (or Cond.~\ref{cond:explicit.bellman}), the quantity
    \begin{equation}
       \! J_{\pi_\theta}(\theta)\! =\! \lim_{T \rightarrow \infty} \frac{1}{T} \E_\theta \left( \sum_{t=1}^T r_\theta(\xi_t,\pi_{\theta}(\xi_t)) |\xi_1 = \xi\right)
        \label{eq:averageReward}
    \end{equation}
     is constant and independent of the initial state. 
     Further, there exists a non-negative function $h_\theta$, with bounded span $C_\theta = \sup_{(\xi,\xi^\prime)\in S^2} h_\theta(\xi) - h_\theta(\xi^\prime) < \infty$, such that for any $\xi \in S$,
     \begin{equation}
         \! J_{\pi_\theta}(\theta) + h_\theta(\xi) = r_\theta(\xi,\pi_\theta(\xi)) + \E_{\theta}[
            h_\theta(\xi^\prime) |\xi,\pi_\theta].
        \label{eq:bellman}
     \end{equation}
\end{itemize}
We denote as $H = \sup_{\theta \in \Theta} C_\theta$ the uniform upper bound on the span.
\end{proposition}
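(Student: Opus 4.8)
The plan is to prove the two bullets by the classical \emph{vanishing discount} argument followed by a telescoping computation, both adapted to the countably infinite meta-state space $S$. Throughout I use that the rewards are bounded, $r_\theta(\xi,A)\in[0,N]$, since each active arm contributes a reward in $[0,1]$ and at most $N$ arms are active.

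For the first bullet (Cond.~\ref{cond:explicit.bellman} $\Rightarrow$ Cond.~\ref{cond:implicit.bellman}), I would start from the policy-evaluation identity satisfied by the discounted value function of the stationary policy $\pi_\theta$, namely $v^\beta_{\pi_\theta}(\xi) = \beta\, r_\theta(\xi,\pi_\theta(\xi)) + \beta\, \E_\theta[v^\beta_{\pi_\theta}(\xi')\mid\xi,\pi_\theta(\xi)]$. Fixing a reference state $\xi_0\in S$ and introducing the relative value function $h^\beta(\xi) = v^\beta_{\pi_\theta}(\xi) - v^\beta_{\pi_\theta}(\xi_0)$, substitution and cancellation give $h^\beta(\xi) + (1-\beta)\, v^\beta_{\pi_\theta}(\xi_0) = \beta\, r_\theta(\xi,\pi_\theta(\xi)) + \beta\, \E_\theta[h^\beta(\xi')\mid\xi,\pi_\theta(\xi)]$. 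Here $(1-\beta)\, v^\beta_{\pi_\theta}(\xi_0)\in[0,N]$ by the reward bound, while Cond.~\ref{cond:explicit.bellman} guarantees that $|h^\beta(\xi)| \le \sup_{(\xi,\xi')}\big(v^\beta_{\pi_\theta}(\xi)-v^\beta_{\pi_\theta}(\xi')\big)$ is bounded uniformly in both $\beta$ and $\xi$. I would then take a sequence $\beta_n\uparrow 1$ and, by Bolzano--Weierstrass together with a diagonal extraction over the countable set $S$, pass to a subsequence along which $(1-\beta_n)\, v^{\beta_n}_{\pi_\theta}(\xi_0)\to g$ and $h^{\beta_n}(\xi)\to h(\xi)$ simultaneously for every $\xi\in S$. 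Taking the limit in the displayed identity — using bounded convergence to move the limit inside $\E_\theta[h^{\beta_n}(\xi')\mid\cdot]$, which is legitimate thanks to the uniform bound — yields $g + h(\xi) = r_\theta(\xi,\pi_\theta(\xi)) + \E_\theta[h(\xi')\mid\xi,\pi_\theta(\xi)]$, which is exactly Cond.~\ref{cond:implicit.bellman} with $v=h$; and $h$ inherits a bounded span as a pointwise limit of functions whose spans are uniformly bounded.

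For the second bullet, I would assume Cond.~\ref{cond:implicit.bellman} with bias function $v$ and constant $g$, rewrite the Bellman equation as $r_\theta(\xi_t,\pi_\theta(\xi_t)) = g + v(\xi_t) - \E_\theta[v(\xi_{t+1})\mid\xi_t]$, sum over $t=1,\dots,T$ with $\xi_1=\xi$, and take expectations. By the tower property the $v$-terms telescope, leaving $\E_\theta\big(\sum_{t=1}^T r_\theta(\xi_t,\pi_\theta(\xi_t))\mid\xi_1=\xi\big) = gT + v(\xi) - \E_\theta[v(\xi_{T+1})\mid\xi_1=\xi]$. Since $v$ has bounded span, the remainder satisfies $|v(\xi) - \E_\theta[v(\xi_{T+1})\mid\xi_1=\xi]| \le C_\theta$, so dividing by $T$ and letting $T\to\infty$ shows the $\limsup$ in Eq.~\ref{eq:averageReward} is a genuine limit equal to $g$, independent of the initial state; hence $J_{\pi_\theta}(\theta)=g$. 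Finally, setting $h_\theta = v - \inf_{\xi'} v(\xi')$, which is finite because the span is finite, produces a non-negative bias function with the same span $C_\theta$ that still satisfies Eq.~\ref{eq:bellman}, since adding a constant to $v$ leaves the Bellman equation unchanged.

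The main obstacle I expect is the simultaneous limit extraction over the infinite state space in the first bullet: securing a single subsequence along which $h^{\beta_n}(\xi)$ converges for all $\xi$ at once, ensuring the limit $h$ retains a bounded span, and justifying the interchange of limit and expectation in $\E_\theta[h^{\beta_n}(\xi')\mid\cdot]$. Every one of these hinges on the uniform boundedness supplied by Cond.~\ref{cond:explicit.bellman}; without it the diagonal argument and the bounded convergence step would both break down. By contrast, the telescoping computation underlying the second bullet is routine once the bounded span of $v$ is in hand.
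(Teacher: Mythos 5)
Your proposal is correct, and both bullets follow the same overall strategy as the paper: a telescoping argument under Cond.~\ref{cond:implicit.bellman} for the second bullet (identical to the paper's, including the shift $h_\theta = v - \inf_\xi v(\xi)$), and the vanishing-discount argument with a reference state, Bolzano--Weierstrass, and a diagonal extraction over the countable space $S$ for the first. The one genuine difference is how you justify passing the limit through the conditional expectation, $\E_\theta[h^{\beta_n}(\xi')\mid\xi] \to \E_\theta[h(\xi')\mid\xi]$. You invoke the bounded convergence theorem, using the uniform bound on $h^{\beta_n}$ supplied by Cond.~\ref{cond:explicit.bellman} as the constant dominating function; this is valid, and it is in fact more general, since it works for any transition kernel on a countable state space. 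The paper instead proves a structural lemma (its Lemma~\ref{le:platzman}, borrowed from Platzman): because the action $\pi_\theta(\xi)$ is deterministic and the counters $\xi^n$ evolve deterministically, the set $S_\xi$ of reachable next states is \emph{finite}, so pointwise convergence on $S_\xi$ suffices and no measure-theoretic convergence theorem is needed. The paper's route makes the role of the restless-bandit structure explicit (and is more elementary), while yours buys robustness: it would survive in a setting where the one-step support of the meta-state chain were infinite. Either justification completes the proof.
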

The proof of Prop.~\ref{prop:cond.1.2.bellman} can be adapted from \citet[Thm.8.10.7]{puterman2014markov} for a given (i.e., not necessarily optimal) policy. We postpone the proof to App.~\ref{app:proof.prop.bellman}.




\section{REGRET BOUND}
\label{sec:regret}
In this section, we bound the Bayesian regret of TSDE (Algorithm \ref{alg:TSDE}). The analysis crucially relies on four distinct properties: 
\textbf{1)} the Bellman equation in Eq.~\ref{eq:bellman} satisfied by the average cost at each policy update,
\textbf{2)} the Thompson sampling algorithm which samples parameters $\theta_{i}$ according to the posterior, hence ensuring that $\thetaStar$ and $\theta_{i}$ are conditionally identical in distribution, 
\textbf{3)} the concentration of the empirical estimates around the $\thetaStar$, and
\textbf{4)} the update scheme in Eq. \ref{eq:terminationCond} which controls the number of episodes while preserving sufficient measurability of the termination times.

We provide here a proof sketch to explain how we leverage those properties and how they translate in key intermediate results that allow us to obtain the final bound. The formal proofs can be found in App.~\ref{app:regret.proofs}.

\subsection{Regret Decomposition}
\label{section:regretDecomposition}

Under Asm.~\ref{asm:bounded.span}, Prop.~\ref{prop:cond.1.2.bellman} ensures that each sampled parameter policy pair $(\theta_i,\pi_i)$ satisfies the Bellman equation (Eq.~\ref{eq:bellman}):
\begin{align*}
r_{\theta_i}(\xi, \pi_i(\xi))
= J_{\pi_i}(\theta_i) + v_{\theta_i}(\xi)
-\E_{\theta_i} [v_{\theta_i}(\xi^\prime) | \pi_i, \xi].
\end{align*}
As a result, we can decompose on each episode $i$ the frequentist regret and obtain over $T$, 
\begin{align*}
R(T;\thetaStar) 
&= J_{\piStar}(\thetaStar)\cdot T - \E_{\thetaStar}\sum_{i=1}^{M_T}\sum_{t=t_i}^{t_{i+1} - 1} r_{\thetaStar}(\xi_t, A_t) \\
&=: R_0 + R_1 + R_2 + R_3,
\end{align*}
where
\begin{align*}
R_0 
&= J_{\piStar}(\thetaStar)\cdot T 
- \E_{\thetaStar}\sum_{i=1}^{M_T}  J_{\pi_i}(\theta_i) \cdot T_i\\
R_1
&= \E_{\thetaStar}\sum_{i=1}^{M_T}\sum_{t=t_i}^{t_{i+1} - 1} 
v_{\theta_i}(\xi_{t+1}) - v_{\theta_i}(\xi_t) \\
R_2
&=\E_{\thetaStar}\sum_{i=1}^{M_T}\sum_{t=t_i}^{t_{i+1} - 1} 
\E_{\theta_i} [v_{\theta_i}(\xi^\prime) | \pi_i, \xi_t]
 - v_{\theta_i}(\xi_{t+1}) \\
R_3
&= \E_{\thetaStar}\sum_{i=1}^{M_T}\sum_{t=t_i}^{t_{i+1} - 1}
(r_{\theta_i} - r_{\thetaStar})(\xi_t, \pi_i(\xi_t)).
\end{align*}
See App.~\ref{app:regret.proofs} for a more detailed derivation.

\textbf{Bounding $R_0$.} The first regret term is addressed thanks to the well-known expectation identity (see~\cite{russo2014learning}), leveraging that conditionally, $\thetaStar \overset{d}{=} \theta_i$. 
\begin{lemma}[Expectation identity]
\label{lemma:expectationIdentity}
Suppose $\thetaStar$ and $\theta_i$ have the same distribution given a history $\Hc$. 
For any $\Hc$-measurable function $f$, we have
\[
\E[f(\thetaStar)|\Hc]
=
\E[f(\theta_i)|\Hc].
\]
\end{lemma}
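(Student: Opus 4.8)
The plan is to prove the identity purely by conditioning on $\Hc$ and exploiting the fact that, once we condition on $\Hc$, the (possibly history-dependent) function $f$ is frozen into a fixed deterministic map, while $\thetaStar$ and $\theta_i$ retain the \emph{same} conditional law by hypothesis. Integrating a fixed function against two identical probability measures gives the same number, so the two conditional expectations must coincide. No property of the specific Markov structure is needed here; the lemma is a general fact about posterior sampling.

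Concretely, I would first pin down the meaning of the hypotheses. By an ``$\Hc$-measurable function $f$'' I read that $f$ is a random map whose \emph{choice} is measurable with respect to $\Hc$; equivalently, on the event $\{\Hc = h\}$ the map $\theta \mapsto f(\theta)$ reduces to a fixed deterministic function $f_h(\cdot)$. Writing $P^\star(\cdot) := \prob(\thetaStar \in \cdot \mid \Hc)$ and $P_i(\cdot) := \prob(\theta_i \in \cdot \mid \Hc)$ for the regular conditional laws on $\Theta$, the assumption that $\thetaStar$ and $\theta_i$ are conditionally identically distributed given $\Hc$ says exactly that $P^\star = P_i$ almost surely. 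The computation is then a one-line change of measure, obtained by disintegrating the conditional expectation against the regular conditional law:
\begin{align*}
\E[f(\thetaStar) \mid \Hc]
&= \int_{\Theta} f_{\Hc}(\theta)\, P^\star(d\theta)
= \int_{\Theta} f_{\Hc}(\theta)\, P_i(d\theta)
= \E[f(\theta_i) \mid \Hc],
\end{align*}
where the crucial middle equality is precisely $P^\star = P_i$. The only formal care required is that $f_{\Hc}$ be jointly measurable so the integrals are well defined; this follows from the $\Hc$-measurability of $f$ together with the existence of regular conditional distributions on $\Theta$ (guaranteed under the usual standard-Borel measurability assumptions on the parameter space).

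The step I expect to be the genuine content, rather than the formal manipulation above, is \emph{verifying that the hypothesis actually holds} in the way TSDE is run when the lemma is invoked in Sec.~\ref{sec:regret}. There one takes $\Hc$ to be the information available at the \emph{start} of episode $i$ and $\theta_i \sim Q_{t_i}$. Since $Q_{t_i}$ is by construction the posterior of $\thetaStar$ given exactly this information, the conditional law of $\theta_i$ matches that of $\thetaStar$, so the hypothesis is met; the subtleties to check are that the external randomization drawing $\theta_i$ is independent of $\thetaStar$ given $\Hc$, and that $t_i$ (hence the conditioning $\sigma$-algebra) is itself $\Hc$-measurable, so that conditioning on $\Hc$ does not smuggle in future information about $\thetaStar$. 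Once these measurability facts are in place, the lemma applies verbatim.
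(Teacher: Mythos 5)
Your proof is correct and is exactly the standard argument underlying this identity, which the paper itself does not prove but cites as well known from \citet{russo2014learning}: conditioning on $\Hc$ freezes $f$ into a fixed map, and integrating it against the two (almost surely equal) regular conditional laws of $\thetaStar$ and $\theta_i$ gives the same value. Your closing caveats are also on point --- the measurability of the episode start time $t_i$ and the conditional independence of the sampling randomization are precisely the subtleties the paper flags when it explains why the lemma cannot be applied directly to $J_{\pi_i}(\theta_i)$ under the lazy-update scheme and must instead be routed through the first, deterministic termination rule.
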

As pointed out in~\citet{ouyang2017learning}, one cannot apply Lemma~\ref{lemma:expectationIdentity} directly to $J_{\pi_i}(\theta_i)$ and $J_{\piStar}(\thetaStar)$ because of the measurability issue arising from the lazy-update scheme in Eq.~\ref{eq:terminationCond}. In line with~\citet{ouyang2017learning}, we overcome this difficulty thanks to the first deterministic termination rule in Eq.~\ref{eq:terminationCond}. Taking the expectation w.r.t. $\thetaStar$ leads to the following lemma.
\begin{lemma}[\citet{ouyang2017learning}, Lemma~3 and 4]
\label{lemma:r0}
\begin{equation*}
    \E_{\thetaStar \sim Q} R_0 \le N \cdot \E_{\thetaStar \sim Q} M_T,
\end{equation*}
where $M_T$ is the total number of episodes until time $T$.
\end{lemma}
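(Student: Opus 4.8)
The plan is to reproduce, in our policy-mapping setting, the argument that \citet{ouyang2017learning} use for the analogous term. Writing $J_\star := J_{\piStar}(\thetaStar)$ and $J_i := J_{\pi_i}(\theta_i)$, the first step is to exploit that the dynamic episodes tile the horizon, so $\sum_{i=1}^{M_T} T_i = T$ almost surely. Substituting $J_\star \cdot T = \E[J_\star \sum_{i=1}^{M_T} T_i]$ collapses $R_0$ into a single per-episode sum,
\[
\E_{\thetaStar\sim Q} R_0 = \E\Big[\sum_{i=1}^{M_T}(J_\star - J_i)\,T_i\Big],
\]
in which each summand compares the true gain to the sampled one, weighted by the (random) episode length $T_i$.

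The tempting move is to annihilate the summand using Lemma~\ref{lemma:expectationIdentity}: conditioned on the start-of-episode history $\Hc_{t_i}$, Thompson sampling makes $\theta_i$ and $\thetaStar$ identically distributed, so applying the identity to $f(\theta)=J_{\mu(\theta)}(\theta)$ gives $\E[(J_\star-J_i)\mid\Hc_{t_i}]=0$; hence for \emph{any} $\Hc_{t_i}$-measurable weight $X_i$ one has $\E[(J_\star-J_i)X_i]=0$. The obstacle — and the reason the lemma needs the special structure of Eq.~\ref{eq:terminationCond} — is that the weight I actually carry is $T_i$, which is \emph{not} $\Hc_{t_i}$-measurable: the random (lazy) termination rule $N_t(\zeta)>2N_{t_i}(\zeta)$ lets $T_i$ depend on states observed \emph{after} $t_i$, so the identity cannot be invoked directly. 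This measurability coupling between the episode length and the fresh sample $\theta_i$ is the main difficulty.

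To decouple them I would replace $T_i$ by the $\Hc_{t_i}$-measurable surrogate $T_{i-1}+1$ and account for the error, splitting the sum as
\[
\E\Big[\sum_{i=1}^{M_T}(J_\star - J_i)(T_{i-1}+1)\Big]
+ \E\Big[\sum_{i=1}^{M_T}(J_\star - J_i)\big(T_i - T_{i-1}-1\big)\Big].
\]
For the first piece I write $\sum_{i\le M_T}=\sum_{i\ge1}\ind\{t_i\le T\}$, observe that both $T_{i-1}+1$ and $\ind\{t_i\le T\}$ are determined by $\Hc_{t_i}$, and apply the expectation identity term by term (legitimate since $M_T\le T$ makes the sum finite), so this piece is exactly zero. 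For the second piece I invoke the \emph{deterministic} rule $t>t_i+T_{i-1}$, which forces $T_i\le T_{i-1}+1$ and thus $T_i-T_{i-1}-1\le0$; bounding $|J_\star-J_i|\le N$ (each per-step reward lies in $[0,1]$ and exactly $N$ arms are active) yields the upper bound $N\,\E[\sum_{i=1}^{M_T}(T_{i-1}+1-T_i)]$.

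The inner sum telescopes: $\sum_{i=1}^{M_T}(T_{i-1}-T_i)=T_0-T_{M_T}\le0$ since $T_0=0$ by the initialization $t_0=t_1=1$, leaving $\sum_{i=1}^{M_T}(T_{i-1}+1-T_i)\le M_T$. Combining the vanishing first piece with this bound gives $\E_{\thetaStar\sim Q}R_0\le N\,\E_{\thetaStar\sim Q}M_T$, exactly as stated. I expect the only genuinely delicate point to be the measurability bookkeeping of the first piece — verifying that $T_{i-1}$ and the episode-presence indicator $\ind\{t_i\le T\}$ are $\Hc_{t_i}$-measurable so the identity applies \emph{before} summation — while the deterministic termination rule is precisely the device that makes the surrogate $T_{i-1}+1$ both measurable and a valid upper bound for $T_i$.
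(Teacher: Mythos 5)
Your proof is correct and follows essentially the same argument as the paper's (which itself adapts Lemmas~3 and~4 of \citet{ouyang2017learning}): use the deterministic termination rule to replace the non-$\Hc_{t_i}$-measurable length $T_i$ by the $\Hc_{t_i}$-measurable surrogate $T_{i-1}+1$, apply the expectation identity to the weight $\ind\{t_i \le T\}(T_{i-1}+1)$, and absorb the slack into $N \cdot \E M_T$. The only cosmetic difference is bookkeeping: the paper first substitutes $J = N - J'$ so the coefficient of $T_i$ is nonnegative and the inequality $T_i \le T_{i-1}+1$ can be applied directly inside the sum, whereas you split off the error term $(J_\star - J_i)(T_i - T_{i-1}-1)$, bound it by $N(T_{i-1}+1-T_i)$ via $|J_\star - J_i| \le N$, and telescope---both yield the identical bound.
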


\textbf{Bounding $R_1$.}
Clearly, $R_1$ involves telescopic sums over each episode $i$. As a result, it solely depends on the number of policy switches and on the uniform span bound $H$ in Prop.~\ref{prop:cond.1.2.bellman}.
\begin{lemma}
\label{lemma:r1}
\begin{equation*}
R_1 \le H \cdot \E M_T.
\end{equation*}
\end{lemma}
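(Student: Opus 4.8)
The plan is to exploit the telescoping structure of the inner summand together with the uniform span bound $H$ from Prop.~\ref{prop:cond.1.2.bellman}. First I would fix an episode $i$ and collapse the inner sum: since the summand $v_{\theta_i}(\xi_{t+1}) - v_{\theta_i}(\xi_t)$ telescopes as $t$ ranges over $t_i, \dots, t_{i+1}-1$, the entire inner sum reduces to its two boundary terms,
$$\sum_{t=t_i}^{t_{i+1}-1} \big( v_{\theta_i}(\xi_{t+1}) - v_{\theta_i}(\xi_t) \big) = v_{\theta_i}(\xi_{t_{i+1}}) - v_{\theta_i}(\xi_{t_i}).$$
This is the key observation: it eliminates any dependence on the episode length $T_i$ and leaves only the difference of the bias function evaluated at the first and last meta-states of the episode.

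Next I would invoke the bounded span. Here $v_{\theta_i}$ denotes the bias function $h_{\theta_i}$ of the Bellman equation (Eq.~\ref{eq:bellman}) associated with the sampled pair $(\theta_i, \pi_i)$, which exists and has span $C_{\theta_i} \le H$ by Asm.~\ref{asm:bounded.span} and Prop.~\ref{prop:cond.1.2.bellman}. Hence, for every realization of $\theta_i$, $\xi_{t_i}$, and $\xi_{t_{i+1}}$,
$$v_{\theta_i}(\xi_{t_{i+1}}) - v_{\theta_i}(\xi_{t_i}) \le \sup_{(\xi,\xi') \in S^2} \big( v_{\theta_i}(\xi) - v_{\theta_i}(\xi') \big) = C_{\theta_i} \le H.$$
Summing this uniform per-episode bound over the (random) number of episodes yields $\sum_{i=1}^{M_T} \big( v_{\theta_i}(\xi_{t_{i+1}}) - v_{\theta_i}(\xi_{t_i}) \big) \le H \cdot M_T$ along every sample path, and taking the expectation $\E_{\thetaStar}$ delivers the claimed bound $R_1 \le H \cdot \E M_T$.

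As for the main obstacle: the argument is essentially mechanical once the telescoping is recognized, so the only point requiring care is that $M_T$ is a random quantity. However, because the per-episode bound $H$ is deterministic and uniform over $\theta \in \Theta$, I can bound the summand termwise \emph{before} performing the sum over $i$; the random index $M_T$ then simply passes through to produce $\E M_T$, with no delicate interchange of expectation and random summation needed. I would also make explicit that the identification of $v_{\theta_i}$ with the bounded-span bias function $h_{\theta_i}$ is exactly what Asm.~\ref{asm:bounded.span} guarantees for each sampled $\theta_i$, which is what makes the span bound applicable episode by episode.
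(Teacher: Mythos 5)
Your proof is correct and follows essentially the same route as the paper's: telescope the inner sum to the boundary terms, bound the difference by the uniform span $H$ via Asm.~\ref{asm:bounded.span} and Prop.~\ref{prop:cond.1.2.bellman}, and sum over episodes before taking the expectation. The extra care you take in identifying $v_{\theta_i}$ with $h_{\theta_i}$ and in handling the random $M_T$ pathwise is sound but is the same argument the paper states more tersely.
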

As a result, both $R_0$ and $R_1$ reduce to a fine bound over the number of episodes, $M_T$.

\textbf{Bounding $R_2$ and $R_3$.} Finally, the last regret terms are dealing with the model misspecification.
That is to say, they depend on the \textit{on-policy} error between the empirical estimate and the true transition model. Formally, Lemma~\ref{lemma:r2} and~\ref{lemma:r3} show that they scale with 
\begin{equation*}
    \Delta_T = \sum_{i=1}^{M_T}\sum_{t=t_i}^{t_{i+1} - 1}\sum_{\text{active arms } k}{
    ||
        (\hat{p}_{t_i} - p_{\thetaStar})(k, \sigma^t_k, n^t_k)
    ||_1,
}
\end{equation*}
where $p_{\theta}(\cdot; k, \sigma, n)$ is the probability distribution of arm $k$'s state under parametrization $\theta$ and $\hat{p}_{t_i}$ is its empirical estimate at the beginning of episode $i$.
The core of the proofs thus lies in deriving a high-probability confidence set whose associated on-policy error $\Delta_T$ is cumulatively bounded by $\sqrt{T}$. 
We state the lemmas here and postpone the proofs to App.~\ref{app:regret.proofs}.
\begin{lemma}
\label{lemma:r2}
$R_2$ satisfies the following bound
\[
R_2 \le 28H \sum_{k=1}^K |S_k|
\sqrt{
    N\Tmix T \log (\Tmix T)
}.
\]
\end{lemma}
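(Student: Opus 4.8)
The plan is to bound $R_2$ by identifying it as an on-policy model-misspecification term and then controlling the cumulative prediction error $\Delta_T$ via a high-probability confidence set combined with a pigeonhole-style counting argument over the truncated visit counts $N_t(\zeta)$. First I would rewrite $R_2 = \E_{\thetaStar}\sum_i \sum_{t=t_i}^{t_{i+1}-1} \big( \E_{\theta_i}[v_{\theta_i}(\xi') \mid \pi_i, \xi_t] - v_{\theta_i}(\xi_{t+1}) \big)$. Conditioning on the history up to time $t$ (so that $\xi_t$, $\pi_i$, and $\theta_i$ are measurable), the conditional expectation of $v_{\theta_i}(\xi_{t+1})$ under the \emph{true} dynamics $\thetaStar$ is $\E_{\thetaStar}[v_{\theta_i}(\xi') \mid \pi_i, \xi_t]$. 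Thus each summand has conditional mean equal to the difference of expectations of the \emph{same} function $v_{\theta_i}$ under the sampled transition kernel $p_{\theta_i}$ and the true kernel $p_{\thetaStar}$. Since the meta-state transition factorizes over the active arms (the passive arms evolve deterministically in the meta-state coordinates by incrementing $n^t_k$), I would use Hölder's inequality: each per-arm contribution is bounded by $\tfrac{1}{2}\,\mathrm{span}(v_{\theta_i}) \cdot \| (p_{\theta_i} - p_{\thetaStar})(k,\sigma^t_k,n^t_k)\|_1$, and $\mathrm{span}(v_{\theta_i}) \le H$ by Prop.~\ref{prop:cond.1.2.bellman}. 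This is the step where I replace the value-function difference by the $\ell_1$ transition error and thereby connect $R_2$ to $\Delta_T$ — except that $\Delta_T$ is stated in terms of the \emph{empirical} estimate $\hat p_{t_i}$ rather than the sampled $p_{\theta_i}$, so I must first insert $\hat p_{t_i}$ via the triangle inequality and separately argue that $\|p_{\theta_i} - \hat p_{t_i}\|_1$ and $\|\hat p_{t_i} - p_{\thetaStar}\|_1$ are each controlled on the confidence event.

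Next I would define the high-probability confidence set: for each tuple $\zeta=(k,s,n)$, the empirical estimate $\hat p_{t_i}(\zeta)$ concentrates around $p_{\thetaStar}(\zeta)$ at rate $\sqrt{ \log(\Tmix T)/\max(1, N_{t_i}(\zeta)) }$ by a standard $\ell_1$ deviation bound for empirical distributions (e.g.\ the Weissman inequality), where the support size is at most $|S_k|$ and the relevant number of effective tuples is $\sum_k |S_k|\cdot \Tmix$. A union bound over all $\zeta \in Z$ and all times up to $T$, with failure probability $\mathcal{O}(1/T)$, yields a good event on which $\|(\hat p_{t_i} - p_{\thetaStar})(\zeta)\|_1 \lesssim \sqrt{|S_k|\log(\Tmix T)/\max(1,N_{t_i}(\zeta))}$ uniformly. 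By Thompson sampling (property 2 in the regret-analysis overview), $\theta_i$ and $\thetaStar$ are conditionally identically distributed given the history at $t_i$, so the same confidence guarantee transfers to $\|p_{\theta_i} - \hat p_{t_i}\|_1$ in expectation. On the complementary low-probability event the summand is trivially bounded by $H$ per step and contributes only $\mathcal{O}(H)$ to the regret after multiplying by the $\mathcal{O}(1/T)$ probability and the horizon $T$.

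It then remains to bound the cumulative sum $\Delta_T \le \sum_i \sum_{t=t_i}^{t_{i+1}-1} \sum_{k: A_{t,k}=1} \sqrt{|S_k|\log(\Tmix T)/\max(1,N_{t_i}(\zeta_t^k))}$. Here the lazy-update rule in Eq.~\ref{eq:terminationCond} is essential: the termination condition $N_t(\zeta) > 2 N_{t_i}(\zeta)$ guarantees that within an episode the \emph{current} count $N_t(\zeta)$ never exceeds twice the count $N_{t_i}(\zeta)$ fixed at the episode's start, so $1/\sqrt{N_{t_i}(\zeta)} \le \sqrt{2}/\sqrt{N_t(\zeta)}$ for every visited tuple. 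This lets me replace the stale denominator by the running count and then apply the standard pigeonhole bound $\sum_{\tau} 1/\sqrt{N_\tau(\zeta)} \le \mathcal{O}(\sqrt{N_T(\zeta)})$ for each fixed $\zeta$, giving $\sum_\zeta \sqrt{N_T(\zeta)} \le \sqrt{|Z| \sum_\zeta N_T(\zeta)}$ by Cauchy--Schwarz, where $\sum_\zeta N_T(\zeta) = NT$ (exactly $N$ active arms are counted per round) and $|Z| = \mathcal{O}(\sum_k|S_k|\,\Tmix)$. Assembling the pieces and pulling out $\sum_k |S_k|$ yields $\Delta_T \lesssim \sum_k |S_k| \sqrt{N \Tmix T \log(\Tmix T)}$, and multiplying by the span factor $H$ produces the claimed bound with the explicit constant $28H$.

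The main obstacle I anticipate is the measurability/distributional bookkeeping needed to transfer the empirical concentration guarantee from the \emph{true} kernel $p_{\thetaStar}$ to the \emph{sampled} kernel $p_{\theta_i}$: because the confidence radius depends on the random counts $N_{t_i}(\zeta)$, one must verify that the confidence event is measurable with respect to the history at the start of each episode and then invoke the Thompson-sampling identity (Lemma~\ref{lemma:expectationIdentity}) carefully — the lazy update makes $T_i$ random, so the argument must condition on the $\sigma$-algebra generated at the deterministically-checkable episode boundaries, mirroring the difficulty already flagged for $R_0$.
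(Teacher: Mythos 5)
Your proposal is correct and follows essentially the same route as the paper's proof: the same rewriting of $R_2$ as an expected difference of transition kernels applied to $v_{\theta_i}$, the same per-arm factorization into $\ell_1$ errors, the same Weissman-type confidence set transferred to $\theta_i$ via the Thompson-sampling identity (Lemma~\ref{lemma:expectationIdentity}), and the same lazy-update/pigeonhole/Cauchy--Schwarz counting that the paper isolates as Lemma~\ref{lemma:onpolicyerror}. The one detail you gloss over is the aggregated counter at $n \ge \Tmix$: the samples pooled there are not identically distributed, so the ``standard'' $\ell_1$ deviation bound does not apply verbatim, and the paper's Lemma~\ref{lemma:confidenceProb} instead reproves it for that case using the mixing-time assumption together with Hoeffding's inequality.
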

\begin{lemma}
\label{lemma:r3}
$R_3$ satisfies the following bound
\[
R_3 \le 28 \sum_{k=1}^K |S_k|
\sqrt{
    N\Tmix T \log (\Tmix T)
}.
\]
\end{lemma}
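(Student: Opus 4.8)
\textbf{Proof proposal for Lemma~\ref{lemma:r3}.}
The plan is to reduce the reward gap $R_3$ to the cumulative on-policy estimation error $\Delta_T$, and then to bound $\Delta_T$ by a concentration-plus-counting argument. First I would control the per-step integrand. Since each $r_k$ maps into $[0,1]$ and, by Eq.~\ref{eq:reward.fully.def}, $r_\theta(\xi_t,A_t)$ is a weighted average of the $r_k$ against the belief distributions $p_\theta(\cdot;k,\sigma_k^t,n_k^t)$, I obtain for every active arm the pointwise bound
\[
\bigl|(r_{\theta_i}-r_{\thetaStar})(\xi_t,\pi_i(\xi_t))\bigr|
\le \sum_{k:A_{t,k}=1}\bigl\|(p_{\theta_i}-p_{\thetaStar})(k,\sigma_k^t,n_k^t)\bigr\|_1.
\]
Inserting the empirical estimate $\hat{p}_{t_i}$ from the start of the episode and using the triangle inequality splits each summand into a term comparing the sampled model $\theta_i$ to $\hat{p}_{t_i}$ and a term comparing the truth $\thetaStar$ to $\hat{p}_{t_i}$; the latter is exactly the integrand of $\Delta_T$.

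To handle both terms at once I would introduce, writing $\zeta_t^k:=(k,\sigma_k^t,n_k^t)$, the $\Hc_{t_i}$-measurable confidence set $\Theta_{t_i}=\{\theta:\|(p_\theta-\hat{p}_{t_i})(\zeta)\|_1\le\beta_{t_i}(\zeta)\text{ for all }\zeta\in Z\}$, where $\beta_{t_i}(\zeta)=\sqrt{c\,|S_k|\log(\Tmix T)/\max(1,N_{t_i}(\zeta))}$ is the radius of a standard $\ell_1$ (Weissman-type) deviation bound for the multinomial estimate built from $N_{t_i}(\zeta)$ samples. A union bound over the at-most $\sum_k|S_k|\Tmix$ tuples and over the episodes gives $\prob(\thetaStar\notin\Theta_{t_i})=\bigO(1/T)$; crucially, because $\Theta_{t_i}$ is $\Hc_{t_i}$-measurable and $\theta_i\overset{d}{=}\thetaStar$ conditionally on $\Hc_{t_i}$, Lemma~\ref{lemma:expectationIdentity} transfers the same guarantee to the sampled $\theta_i$. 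On the good event $\{\theta_i,\thetaStar\in\Theta_{t_i}\}$ both $\ell_1$ terms are at most $\beta_{t_i}(\zeta_t^k)$, so the per-step gap is bounded by $2\sum_{k:A_{t,k}=1}\beta_{t_i}(\zeta_t^k)$; on the complementary event the gap is at most $2N$ per step, and multiplying by $T$ and the $\bigO(1/T)$ failure probability leaves only a lower-order additive constant.

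It then remains to bound $\sum_{i}\sum_{t=t_i}^{t_{i+1}-1}\sum_{k:A_{t,k}=1}\beta_{t_i}(\zeta_t^k)$. The lazy-update termination rule in Eq.~\ref{eq:terminationCond} guarantees $N_t(\zeta)\le 2N_{t_i}(\zeta)$ throughout episode $i$, hence $\beta_{t_i}(\zeta)\le\sqrt2\,\beta_t(\zeta)$ with $\beta_t$ formed from the running counter, and since the episodes partition $\{1,\dots,T\}$ the episode-wise sum is at most $\sqrt2\sumtt\sum_{k:A_{t,k}=1}\beta_t(\zeta_t^k)$. Grouping the active visits of a fixed arm $k$ by tuple and using $\sum_{m=1}^{n}m^{-1/2}\le2\sqrt n$ followed by Cauchy--Schwarz over its $|S_k|\Tmix$ tuples gives $\sum_{t:A_{t,k}=1}1/\sqrt{\max(1,N_t(\zeta_t^k))}\le 2\sqrt{|S_k|\Tmix\,T_k}$, where $T_k$ is the number of active steps of arm $k$ and $\sum_kT_k=NT$. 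Folding in the $\sqrt{|S_k|}$ carried by $\beta_t$ and a final Cauchy--Schwarz over arms,
\[
\sum_k\sqrt{|S_k|}\cdot2\sqrt{|S_k|\Tmix T_k}
=2\sqrt{\Tmix}\sum_k|S_k|\sqrt{T_k}
\le 2\sqrt{\Tmix}\Bigl(\sum_k|S_k|\Bigr)\sqrt{NT},
\]
which after collecting the constant $c$ and the $\sqrt{\log(\Tmix T)}$ factor yields the claimed bound with explicit constant $28$. The companion bound on $R_2$ (Lemma~\ref{lemma:r2}) follows identically: after discarding a zero-mean martingale difference, its integrand is controlled by $H\cdot\|(p_{\theta_i}-p_{\thetaStar})(\zeta_t^k)\|_1$ because $v_{\theta_i}$ has span at most $H$, so the same chain produces the extra factor $H$.

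The main obstacle is the measurability bookkeeping in the second step: inside an episode the trajectory $\{\xi_t\}$ is driven by the true dynamics $\thetaStar$ while the policy is $\mu(\theta_i)$, so the two $\ell_1$ terms cannot be equated by a naive application of the expectation identity to the realized states. Routing everything through the $\Hc_{t_i}$-measurable event $\{\theta_i,\thetaStar\in\Theta_{t_i}\}$ is precisely what lets Lemma~\ref{lemma:expectationIdentity} act only on history-measurable model membership rather than on the $\theta_i$-dependent trajectory, decoupling the two sources of randomness. Secondary care is needed to ensure that the two $\sqrt{|S_k|}$ factors (one from the $\ell_1$ concentration radius, one from the tuple count $\sqrt{|S_k|\Tmix}$) combine into the linear $|S_k|$ of the statement, to handle first-visit counters with $N_t=0$, and to verify that the aggregation of tuples with $n\ge\Tmix$ keeps $|Z|\le\sum_k|S_k|\Tmix$ so that the union bound and both Cauchy--Schwarz steps go through with the stated constant.
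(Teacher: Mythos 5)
Your overall architecture is exactly the paper's: bound the per-step reward gap by the on-policy error $\sum_{k:A_{t,k}=1}\|(p_{\theta_i}-p_{\thetaStar})(k,\sigma^t_k,n^t_k)\|_1$ using $r_k\in[0,1]$, split via the triangle inequality around $\phat_{t_i}$, control both halves through an $\Hc_{t_i}$-measurable confidence set so that Lemma~\ref{lemma:expectationIdentity} transfers the guarantee from $\thetaStar$ to $\theta_i$ (this is Lemma~\ref{lemma:confidenceProb}), and then bound the cumulative radii via the lazy-update property $N_t(\zeta)\le 2N_{t_i}(\zeta)$, the pigeonhole bound $\sum_{m\le n}m^{-1/2}\le 2\sqrt{n}$, and two Cauchy--Schwarz steps with $\sum_{\zeta\in Z}N_{T+1}(\zeta)=NT$ (this is Lemma~\ref{lemma:onpolicyerror}); the choice $\delta=1/(\Tmix T)$ then yields the $\log(\Tmix T)$ factor and the constant $28=24+4$. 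The only structural difference is cosmetic: the paper derives the $\ell_1$ reduction once in the proof of Lemma~\ref{lemma:r2} (Eq.~\ref{eq:r2_2}) and lets Lemma~\ref{lemma:r3} reuse it, whereas you rederive the chain inline.

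There is, however, one step that would fail as written. You invoke a ``standard $\ell_1$ (Weissman-type) deviation bound for the multinomial estimate built from $N_{t_i}(\zeta)$ samples'' uniformly over $\zeta\in Z$, and you flag the $n\ge\Tmix$ aggregation only as a cardinality issue for $|Z|$. But for an aggregated tuple $\zeta=(k,s,n)$ with $n\ge\Tmix$, the $N_{t_i}(\zeta)$ samples are \emph{not} identically distributed: they are drawn from the family $p_{\thetaStar}(k,s,n^\prime)$ over varying $n^\prime\ge\Tmix$, so Weissman's inequality does not apply directly, and one must even specify which distribution $\phat_{t_i}(\zeta)$ is estimating. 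The paper's Lemma~\ref{lemma:confidenceProb} handles this by redoing Weissman's argument by hand: write $\|p-\phat\|_1=2\max_{A\subset S_k}\big(p(A)-\phat(A)\big)$, union bound over the $2^{|S_k|}$ subsets, and apply Hoeffding to $\E\phat(A)-\phat(A)$, absorbing the heterogeneity of the samples into a bias term $|p(A)-\E\phat(A)|<\frac{1}{T}<\frac{c}{4}$. That bias is small precisely because $\Tmix=(\log_2 T)\Tmix(\frac{1}{4})\ge\Tmix(\frac{1}{T})$ was tuned to accuracy $\frac{1}{T}$ rather than a constant (cf.\ Prop.~\ref{prop:mixing.times}); this is the entire reason for that definition. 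Without this bias-plus-Hoeffding step, your union bound over $Z$ has no valid per-tuple tail for the aggregated buckets; once you add it, the rest of your argument goes through and matches the paper's proof.
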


We detail the construction and probabilistic argument of the confidence set later in the section.

\subsection{Bounding the Number of Episodes}
As breifly discussed in Sec. \ref{sec:alg}, 
each episode has a random length $T_i$, 
and the number of episodes $M_T$ also becomes random. 
In order to bound $R_0$ and $R_1$, we first bound this quantity. As discussed in \citet{osband2014near},
the specific structure of our problem due to the MDP formulation of the original POMDP problem allows us to guarantee a tighter bound w.r.t. the number of states than straightforwardly applying the TSDE analysis on the meta-state $\xi$. 
In particular, we leverage this structure to obtain a bound that depends on the number of states through the summation $\sum_{k=1}^K |S_k|$ instead of the product $\prod_{k=1}^K |S_k|$. 
\begin{lemma}
\label{lemma:numEp}
The number of episodes $M_T$ satisfies the following inequality almost surely
\[
M_T 
\le 2 \sqrt{
    (\sum_{k=1}^K |S_k|) \Tmix T\log NT
}.
\]
\end{lemma}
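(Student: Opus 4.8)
The plan is to prove the bound pathwise (hence almost surely) by a purely combinatorial counting argument that never invokes probability: I would fix an arbitrary realization of the interaction and bound $M_T$ directly from the two termination rules in Eq.~\ref{eq:terminationCond}. The first step is to classify every episode $i$ by the rule that ended it, writing $M_T = M_T^{(1)} + M_T^{(2)}$, where $M_T^{(1)}$ counts the episodes terminated by the deterministic length rule $t > t_i + T_{i-1}$ and $M_T^{(2)}$ counts those terminated (strictly earlier) by the doubling rule $N_t(\zeta) > 2N_{t_i}(\zeta)$. Following the TSDE analyses of \citet{ouyang2017learning} and \citet{osband2014near}, the strategy is to control these two counts separately and then combine them.

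Bounding $M_T^{(2)}$ is where the special per-arm structure enters, and this is the step I expect to be the crux. Since each doubling-terminated episode has at least one counter $\zeta$ that more than doubled, I would bound $M_T^{(2)}$ by the total number of doublings $\sum_{\zeta \in Z}(\text{doublings of }\zeta)$. For a fixed $\zeta$, the value of $N_{t_i}(\zeta)$ at the starts of successive triggering episodes at least doubles (after the first increment away from zero), so the number of triggers of $\zeta$ is $O(\log N_{T}(\zeta))$. Summing over $\zeta$ and applying concavity of the logarithm together with the budget $\sum_{\zeta} N_T(\zeta) \le NT$ (each of the $T$ steps increments exactly $N$ per-arm counters) yields $M_T^{(2)} = O(|Z|\log(NT))$. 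The key point is that the rule is stated in terms of the \emph{per-arm} counters $N_t(k,s,n)$, so $|Z| \le \sum_{k=1}^K |S_k|\,\Tmix$; this is precisely what replaces the $\prod_{k}|S_k|$ that a naive count on the meta-state $\xi$ would produce.

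Bounding $M_T^{(1)}$ is the source of the $\sqrt{T}$ factor. The length rule forces $T_i \le T_{i-1} + 1$ on every episode, so I would group the length-terminated episodes into maximal ``runs'' inside which the length increases by exactly one each step; the number of such runs is at most $M_T^{(2)} + 1$, since only a doubling termination can reset the growth. A run of $m$ consecutive length-terminated episodes contributes at least $1 + 2 + \cdots + m \ge m^2/2$ to $\sum_i T_i \le T$, so by Cauchy--Schwarz across the at most $M_T^{(2)}+1$ runs I obtain $M_T^{(1)} \le \sqrt{2(M_T^{(2)}+1)T}$. Combining with the logarithmic bound on $M_T^{(2)}$ and substituting $|Z| \le \sum_k |S_k|\Tmix$ gives $M_T = M_T^{(1)} + M_T^{(2)} = O\!\big(\sqrt{(\sum_k |S_k|)\Tmix T\log NT}\big)$, and tracking the constants yields the stated factor of $2$.

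The main obstacle is the doubling count for $M_T^{(2)}$: I must handle the degenerate case where a counter starts at $N_{t_i}(\zeta)=0$ (so the very first increment triggers immediately), and I must be careful that the doubling relation is measured against the baseline $N_{t_i}(\zeta)$ that is itself reset at every episode, not only at triggering ones. Verifying that successive triggering baselines still grow geometrically, and that the concavity/budget step produces $\sum_k|S_k|$ rather than $\prod_k|S_k|$, is the delicate part; the remaining growth argument for $M_T^{(1)}$ is a routine Cauchy--Schwarz once the run decomposition is in place.
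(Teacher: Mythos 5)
Your proposal is, in substance, the paper's own proof: your ``runs'' of length-terminated episodes are exactly the macro episodes of \citet{ouyang2017learning} that the paper uses, your bound on $M_T^{(2)}$ via per-counter doublings, Jensen's inequality, and the budget $\sum_{\zeta}N_{T+1}(\zeta)=NT$ is identical to the paper's bound $M \le 2(\sum_k |S_k|)\Tmix\log NT$ on the number of macro episodes, and your quadratic-growth-plus-Cauchy--Schwarz step is Ouyang et al.'s Lemma~1. All the key ideas are present and correct, including the observation that the per-arm counters give $|Z|\le \sum_k|S_k|\Tmix$ rather than a product.

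The one place your bookkeeping falls slightly short of the stated inequality is the final combination. You bound $M_T = M_T^{(1)} + M_T^{(2)} \le \sqrt{2\bigl(M_T^{(2)}+1\bigr)T} + M_T^{(2)}$, i.e., the doubling-terminated episodes are added \emph{outside} the square root, which leaves an extra additive term of order $(\sum_k|S_k|)\Tmix\log NT$; this is lower order in $T$ but means the claimed constant-$2$ bound does not literally follow, contrary to your final sentence. The paper avoids this by counting \emph{all} episodes inside the Cauchy--Schwarz: a macro episode is a run \emph{together with} its terminating doubling episode, so $M_T = \sum_{r} (m_r + 1)$ and Cauchy--Schwarz over the $M \le M_T^{(2)}+1$ macro episodes gives $M_T \le \sqrt{2MT}$ directly, with no separate additive term. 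Folding your terminator episodes into their runs is a one-line repair and makes your argument coincide with the paper's exactly.
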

\begin{proof}
Following \citet{ouyang2017learning}, 
we define macro episodes with start times $t_{n_i}$
for a sub-sequence 
$\{n_i\} \subset [M_T]$
such that 
$n_1 = 1$ and
\[
t_{n_{i+1}}
=
\min\{
    t_k > t_{n_i} 
    |
    N_{t_k} (\zeta) > 2 N_{t_{k-1}}(\zeta)
    \text{ for some }
    \zeta
\}.
\]
Note that the macro episode starts when the second termination criterion happens. 
\citet{ouyang2017learning} prove in their Lemma 1 that
\begin{equation}
\label{eq:numEp1}
M_T 
\le 
\sqrt{2MT},
\end{equation}
where $M$ is the number of macro episodes. 
We claim
\begin{equation}
\label{eq:numEp2}
M 
\le 
2(\sum_{k=1}^{K} |S_k|) \Tmix \log NT,
\end{equation}
which prove our lemma when combined with Eq. \ref{eq:numEp1}.

For each $\zeta = (k, s, n) \in Z$, we define 
\[
M(\zeta) 
= 
|\{
    i \le M_T 
    | 
    N_{t_i}(\zeta) > 2 N_{t_{i-1}}(\zeta)
\}|.
\]

This means that $N_t(\zeta)$ gets doubled $M(\zeta)$ times 
out of $M_T$ episodes. 
It leads to the following inequality
\[
2^{M(\zeta)} \le N_{T+1} (\zeta)
\text{, or }
M(\zeta) \le 2 \log N_{T+1}(\zeta).
\]

Then we have 
\begin{align*}
M 
&\le 1 + \sum_{\zeta \in Z}M(\zeta)\\
&\le 1 + 2\sum_{\zeta \in Z} \log N_{T+1}(\zeta)\\
&\le 1 + 2 (\sum_{k=1}^K |S_k |) \Tmix \log 
\frac{
    \sum_{\zeta}N_{T+1}(\zeta)
}{
    (\sum_k |S_k |) \Tmix
}\\
&= 1 + 2 (\sum_{k=1}^K |S_k |) \Tmix \log 
\frac{
    NT
}{
    (\sum_k |S_k |) \Tmix
} \\
&\le 2 (\sum_{k=1}^K |S_k |) \Tmix \log NT,
\end{align*}
where we added $1$ to account for the initial case $n_1 = 1$
and the third inequality holds due to Jensen's inequality 
along with the fact that 
$|Z| \le \sum_k |S_k |\cdot \Tmix$. 
The equality holds because 
$\sum_{\zeta}N_{T+1}(\zeta)$
is the total number of active arms until time $T$.
This proves our claim (Eq. \ref{eq:numEp2}) and therefore the lemma.
\end{proof}

\subsection{Confidence Set}
To bound $R_2$ and $R_3$, 
we construct a confidence set for the system parameters $\theta$. 
Recall that $\zeta$ represents $(k, s, n)$. 
Suppose at time $t$, the state of arm $k$ was observed to be $s$ in $n$ rounds ago.
Let $p_\theta(\zeta)$ denote the probability distribution of 
the arm's state if the true system were $\theta$.
For an individual probability weight, 
we write 
$p_\theta(s^\prime;\zeta) = p_\theta(s^\prime;k, s, n)$ 
for $s^\prime \in S_k$.
Using the $N_t(\zeta)$ samples collected so far, 
we can also compute an empirical distribution $\phat_t(\zeta)$. 
We construct a confidence set as a collection of $\theta$
such that $p_\theta(\zeta)$ is close to $\phat_t(\zeta)$.
Namely in episode $i$, we define 
\begin{align*}
\Theta_i
=
\{
    \theta \in \Theta
    |
    \forall \zeta \in Z,
    ||(p_\theta - \phat_{t_i})(\zeta)||_1
    \le
    c_i(\zeta)
\},
\end{align*}
where 
$
c_i(\zeta) = 
c_i(k, s, n) =
\sqrt{
    \frac{
        8 |S_k| \log 1/\delta
    }{
        1 \vee N_{t_i}(\zeta)
    }
}.
$

Since $\Theta_i$ is $\Hc_{t_i}$-measurable, 
Lemma \ref{lemma:expectationIdentity} provides 
\begin{align*}
\prob(\thetaStar \notin \Theta_i | \Hc_{t_i})
=
\prob(\theta_i \notin \Theta_i | \Hc_{t_i}).
\end{align*}
The following lemma bounds this probability. 

\begin{lemma}
\label{lemma:confidenceProb}
For every episode $i$, we can bound
\[
\prob(\thetaStar \notin \Theta_i | \Hc_{t_i})
=
\prob(\theta_i \notin \Theta_i | \Hc_{t_i})
\le
\sum_{k=1}^K|S_k|\cdot \delta\Tmix.
\]
\end{lemma}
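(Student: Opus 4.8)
The plan is to reduce the claim to a frequentist $L^1$ concentration of the empirical transition estimates, configuration by configuration, followed by a union bound over $Z$. The equality $\prob(\thetaStar \notin \Theta_i \mid \Hc_{t_i}) = \prob(\theta_i \notin \Theta_i \mid \Hc_{t_i})$ is already handed to us: since $\Theta_i$ is $\Hc_{t_i}$-measurable and $\thetaStar \overset{d}{=} \theta_i$ given $\Hc_{t_i}$, it is an instance of Lemma~\ref{lemma:expectationIdentity}. So it suffices to control one of these probabilities. I would do so by conditioning on the realized true parameter $\thetaStar$ and observing that, by the Markov property, the observations gathered in a fixed configuration $\zeta = (k,s,n)$ are i.i.d.\ draws from $p_{\thetaStar}(\zeta)$ on the alphabet $S_k$: whenever arm $k$ is pulled having last been seen in state $s$ exactly $n$ steps earlier, the revealed state is distributed as $p_{\thetaStar}(\cdot\,;k,s,n)$, independently of the past. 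Hence $\phat_{t_i}(\zeta)$ is the empirical distribution of such i.i.d.\ samples.

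Next I would invoke a Weissman-type (Bretagnolle--Huber--Carol) $L^1$ deviation bound: for $N$ i.i.d.\ samples on an alphabet of size $m$, $\prob(\|\phat - p\|_1 \ge \epsilon) \le 2^m e^{-N\epsilon^2/2}$. Plugging in the adaptive radius $c_i(\zeta) = \sqrt{8|S_k|\log(1/\delta)/(1 \vee N_{t_i}(\zeta))}$ makes the exponent equal to $4|S_k|\log(1/\delta)$, so the single-count deviation probability is at most $2^{|S_k|}\delta^{4|S_k|}$, which is below $\delta$ as soon as $\delta$ is bounded away from $1$ (for $|S_k|\ge 1$ this needs only $\delta \lesssim 2^{-1/3}$, with large slack for the small $\delta$ used later). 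Summing this per-configuration failure probability over $\zeta \in Z$ and using the cardinality bound $|Z| \le \sum_k |S_k|\cdot\Tmix$ recorded in Sec.~\ref{sec:alg} produces exactly $\sum_k |S_k|\,\delta\Tmix$.

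The point that needs care — and the main obstacle — is that the sample count $N_{t_i}(\zeta)$ is itself random and depends on the full interaction history, since actions are chosen adaptively; the count entering $c_i(\zeta)$ is therefore not a fixed sample size, and the plain Weissman bound applies only at a deterministic $N$. I would resolve this by making the concentration uniform over the number $m$ of collected $\zeta$-samples: apply the deviation bound for each possible $m$ and absorb the resulting union into the generous gap between $\delta^{4|S_k|}$ and $\delta$ (equivalently, a stopping-time/peeling argument on the i.i.d.\ $\zeta$-stream, whose deviation we need only evaluate at the episode boundary $t_i$). Since the resulting bound $\prob(\|(p_{\thetaStar} - \phat_{t_i})(\zeta)\|_1 > c_i(\zeta) \mid \thetaStar) \le \delta$ holds for every realization of $\thetaStar$, it survives integration against the prior $Q$, and together with the Thompson-sampling identity above this yields the stated control on $\prob(\thetaStar \notin \Theta_i \mid \Hc_{t_i})$.
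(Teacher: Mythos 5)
Your argument covers only half of the paper's proof, and the half it misses is the one that required a new idea. The claim that ``the observations gathered in a fixed configuration $\zeta=(k,s,n)$ are i.i.d.\ draws from $p_{\thetaStar}(\zeta)$'' is false for the aggregated bin $n \ge \Tmix$: by construction, $N_t(k,s,n)$ for $n \ge \Tmix$ pools all observations in which arm $k$ was last seen in state $s$ exactly $n'$ steps ago, for \emph{every} $n' \ge \Tmix$. Those samples are drawn from the heterogeneous family of distributions $p_{\thetaStar}(\cdot\,;k,s,n')$, $n'\ge\Tmix$, which are close to one another (by mixing) but not equal, and none of them need coincide with the nominal distribution against which the confidence radius $c_i(\zeta)$ is tested. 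So the Weissman inequality, which is a statement about i.i.d.\ samples from a single law, does not apply to this bin, and your union bound over $Z$ is incomplete exactly on the configurations that the truncated counter was introduced to handle. The paper's proof treats this case separately: it reproves the Weissman-type bound from scratch by writing $\|p-\phat\|_1 = 2\max_{A\subset S_k}\, p(A)-\phat(A)$, union bounding over subsets $A$, using Asm.~\ref{asm:mixing} (via the definition of $\Tmix$) to control the bias $|p(A)-\E\phat(A)| < 1/T < c/4$ incurred by pooling different $n'$, and then applying Hoeffding's inequality to the centered sum; this yields $2^{|S_k|}\exp(-mc^2/8)\le\delta$ and restores the per-configuration bound. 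Without some version of this argument (or an alternative handling of the aggregation bias), your proof does not establish the lemma.

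For the non-aggregated bins $n < \Tmix$ your reasoning matches the paper's (same Weissman bound, same radius, same slack computation $2^{|S_k|}\delta^{4|S_k|}\le\delta$, same union bound over $|Z|\le\sum_k|S_k|\Tmix$), and the equality via Lemma~\ref{lemma:expectationIdentity} is handled identically. Your observation that $N_{t_i}(\zeta)$ is a random, adaptively determined sample count is a legitimate point of care that the paper itself glosses over, and your proposed fix (a union/peeling over the possible values of the count, absorbed by the gap between $\delta^{4|S_k|}$ and $\delta$) is a reasonable way to patch it at least for the small $\delta$ used downstream. But that refinement does not compensate for the missing aggregation case, which is the substantive content of Lemma~\ref{lemma:confidenceProb}.
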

\begin{proof}
For an episode $i$, pick $(k, s, n) = \zeta \in Z$ and let $m = N_{t_i}(\zeta)$. 
If $m$ equals to $0$, then $c_i(\zeta) > 1$ and the inequality 
$
||(p_\theta - \phat)(\zeta)||_1
\le
c_i(\zeta)
$
becomes trivial. 
Suppose $m > 0$.
We first analyze the case $n < \Tmix$.
\citet{weissman2003inequalities} show that 
\begin{equation}
\label{eq:confidence1}    
\prob(||(p_\theta - \phat)(\zeta)||_1 \ge \epsilon)
\le
2^{|S_k|} \exp(
    -\frac{
        m\epsilon^2
    }{
        2
    }
).
\end{equation}
Setting 
$
\epsilon 
= c_i(\zeta)
= \sqrt{
    \frac{
        8 |S_k| \log 1/\delta
    }{
        n
    }
}
$
, we get 
\begin{equation}
\label{eq:confidence2}
\prob(||(p_\theta - \phat)(\zeta)||_1 \ge c_i(\zeta))
\le
\delta.
\end{equation}

For the case $n = \Tmix$, we want to prove the same probability bound in Eq. \ref{eq:confidence2} but cannot directly use Eq. \ref{eq:confidence1} due to aggregation.
We can still show a similar bound by using the proof technique by \citet{weissman2003inequalities}. 

For simplicity, write $p_\theta(\zeta) = p$, $\phat(\zeta) = \phat$, and $c_i(\zeta) = c$.
Then it can be easily checked that 
\begin{equation*}
||p - \phat||_1
=
2 \max_{A \subset S_k} p(A) - \phat(A).
\end{equation*}
Using this and the union bound, we can write 
\begin{align}
\label{eq:confidence3}
\prob(||p - \phat||_1 \ge c)
\le 
\sum_{A \subset S_k} 
\prob(p(A) - \phat(A) \ge \frac{c}{2}).
\end{align}
By the definition of $\Tmix$, we have 
\[
|p(A) - \E \phat(A)| 
<
\frac{1}{T} 
<
\frac{c}{4}.
\]
Then Hoeffding's inequality implies that
\begin{align*}
\prob(p(A) - \phat(A) \ge \frac{c}{2})
&\le
\prob(\E \phat(A) - \phat(A) \ge \frac{c}{4})\\
&\le
\exp(-\frac{mc^2}{8}).
\end{align*}
Plugging this in Eq. \ref{eq:confidence3}, we get 
\[
\prob(||p - \phat||_1 \ge c)
\le 
2^{|S_k|}\exp(-\frac{mc^2}{8})
\le \delta,
\]
which shows Eq. \ref{eq:confidence2} for the case $n = \Tmix$.

Since $|Z| \le \sum_{k=1}^K|S_k|\cdot \Tmix$, 
applying the union bound finishes the proof.
\end{proof}
Furthermore, the confidence set satisfies that the cumulative on-policy error $\Delta_T$ (see Sec. \ref{section:regretDecomposition}) is bounded.
\begin{lemma}
\label{lemma:onpolicyerror}
On the high-probability event $\thetaStar \in \cap_{i\leq M_T} \Theta_i$, we can show
\begin{equation*}
\Delta_T \leq 12
\sqrt{
    N\Tmix T \log 1/\delta
}
\sum_{k=1}^K |S_k|.
\end{equation*}
\end{lemma}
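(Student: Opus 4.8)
The plan is to pass to the good event, replace the exact confidence radii by a closed form, trade the episode-start counts for running counts via the lazy-update rule, and then collapse everything with a Cauchy--Schwarz step grouped so that the state dependence appears as $\sum_k |S_k|$. First I would restrict attention to $\thetaStar \in \cap_{i\le M_T}\Theta_i$. By the very definition of $\Theta_i$, on this event every summand of $\Delta_T$ obeys $\|(\phat_{t_i}-p_{\thetaStar})(k,\sigma^t_k,n^t_k)\|_1 \le c_i(k,\sigma^t_k,n^t_k)$, so $\Delta_T \le \sum_{i=1}^{M_T}\sum_{t=t_i}^{t_{i+1}-1}\sum_{k:A_{t,k}=1} c_i(k,\sigma^t_k,n^t_k)$ with $c_i(\zeta)=\sqrt{8|S_k|\log(1/\delta)/(1\vee N_{t_i}(\zeta))}$.

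Next I would eliminate the dependence on the episode-start index $t_i$. Because the second criterion in Eq.~\ref{eq:terminationCond} stops the episode as soon as some count exceeds twice its value at $t_i$, one has $N_t(\zeta)\le 2N_{t_i}(\zeta)$ for every $t$ in episode $i$ and every $\zeta$; as the truncated counts are non-decreasing, this gives $1\vee N_{t_i}(\zeta)\ge\tfrac12(1\vee N_t(\zeta))$ and hence $c_i(\zeta)\le 4\sqrt{|S_k|\log(1/\delta)/(1\vee N_t(\zeta))}$. Recombining the episode sums into a single sum over $t=1,\dots,T$ turns the bound into $\Delta_T \le 4\sqrt{\log(1/\delta)}\sum_{t=1}^T\sum_{k:A_{t,k}=1}\sqrt{|S_k|/(1\vee N_t(k,\sigma^t_k,n^t_k))}$.

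I would then reorganise this double sum by the tuple $\zeta\in Z$. For a fixed $\zeta$, at its $j$-th active occurrence the counter reads $N_t(\zeta)=j-1$, so summing $1/\sqrt{1\vee N_t(\zeta)}$ over its occurrences is at most $1+2\sqrt{N_{T+1}(\zeta)}\le 3\sqrt{N_{T+1}(\zeta)}$; this yields $\Delta_T \le 12\sqrt{\log(1/\delta)}\sum_{\zeta\in Z}\sqrt{|S_k|\,N_{T+1}(\zeta)}$. A single Cauchy--Schwarz over $\zeta\in Z$ then gives $\sum_{\zeta}\sqrt{|S_k|N_{T+1}(\zeta)}\le \sqrt{\sum_{\zeta}|S_k|}\,\sqrt{\sum_{\zeta}N_{T+1}(\zeta)}$, where $\sum_{\zeta}N_{T+1}(\zeta)=NT$ counts all active pulls and $\sum_{\zeta}|S_k|=\Tmix\sum_k|S_k|^2$ since each arm contributes $|S_k|\Tmix$ tuples. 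Combining and using $\sqrt{\sum_k|S_k|^2}\le\sum_k|S_k|$ recovers the stated bound with constant $12$.

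The main obstacle is the final grouping: the analysis must deliver the additive state factor $\sum_k|S_k|$ (the point stressed just before Lemma~\ref{lemma:numEp}) rather than a product or a squared sum, and this only works because Cauchy--Schwarz is applied over the full index set $Z$ — so that the total pull count $\sum_\zeta N_{T+1}(\zeta)=NT$ can be used — followed by the cancellation $\sqrt{\sum_k|S_k|^2}\le\sum_k|S_k|$. Secondary care is needed in the lazy-update conversion of Step~2 and in checking that the per-occurrence identity $N_t(\zeta)=j-1$ and the count $|S_k|\Tmix$ of tuples per arm remain correct for the \emph{aggregated} counter used when $n\ge\Tmix$.
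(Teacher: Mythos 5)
Your proof is correct and follows essentially the same route as the paper's: restriction to the good event to replace each term by $c_i(\zeta)$, the lazy-update conversion $1 \vee N_{t_i}(\zeta) \ge \tfrac12 (1 \vee N_t(\zeta))$, reorganization by tuples $\zeta \in Z$ with the $\sum_{j} 1/\sqrt{1 \vee (j-1)} \le 3\sqrt{N_{T+1}(\zeta)}$ bound, and a concluding Cauchy--Schwarz. The only (cosmetic) difference is in the last step: the paper applies Cauchy--Schwarz per arm, $\sum_{s,n}\sqrt{N_{T+1}(k,s,n)} \le \sqrt{|S_k|\Tmix \, NT}$, and then sums over $k$, whereas you apply it once over all of $Z$ and finish with $\sqrt{\sum_k |S_k|^2} \le \sum_k |S_k|$ --- both groupings yield the identical bound with constant $12$.
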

The proof of Lemma~\ref{lemma:onpolicyerror} is postponed to App.~\ref{app:regret.proofs}.  We want to emphasize that the set $\Theta_i$ only appears in the proof
and it has nothing to do with running TSDE. 
For example, we can set an arbitrary value for $\delta$ 
to make the proof works. 
The main idea of bounding $R_2$ and $R_3$ is that 
the event $\thetaStar, \theta_i \in \Theta_i$ happens 
with high probability, and if so, 
then $\piStar$ and $\pi_i$ behave similarly.

\subsection{Putting Everything Together}
Plugging Lemma \ref{lemma:r0}, \ref{lemma:r1}, \ref{lemma:r2}, and \ref{lemma:r3} into the regret decomposition, we prove our main result.

\newtheorem*{new.thm:main}{Theorem~\ref{thm:main}}
\begin{new.thm:main}[Exact regret bound, restated]
The Bayesian regret of TSDE is bounded by
\begin{align*}
2(H&+N)\sqrt{
    (\sum_{k=1}^K |S_k|) \Tmix T\log NT
}\\
&+
28(H+1)(\sum_{k=1}^K |S_k|)
\sqrt{
    N\Tmix T \log (\Tmix T)
},
\end{align*}
where $\Tmix = (\log_2 T)\Tmix(\frac{1}{4}) = \bigO(\log T)$.
\end{new.thm:main}

\section{EXPERIMENTS}
We empirically evaluated TSDE (Algorithm \ref{alg:TSDE}) on simulated data. 
Following \citet{jung2019regret}, 
we chose the Gilbert-Elliott channel model in  Figure \ref{fig:GEmodel} to model each arm. 
This model assumes binary states and is widely used in communication systems (e.g., see \citet{liu2010indexability}).

\begin{figure}[ht]
\label{sec:experiments}
\centering
\includegraphics[width=\columnwidth]{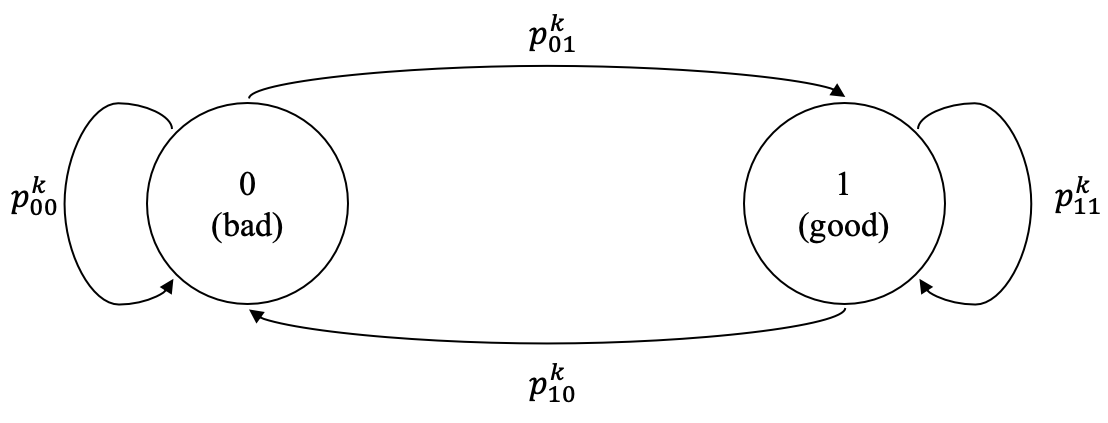}
\caption{The Gilbert-Elliott channel model}
\label{fig:GEmodel}
\end{figure}

For simplicity, we assumed $\Pactive = \Ppassive$ and $r_k(s) = s$. 
This means that the learner's action does not affect the transition matrix and the binary reward equals one if and only if the state is good. 
We also assumed the initial states of the arms are all good. 
Each arm has two parameters: $p^k_{01}$ and $p^k_{11}$. 
We set the prior to be uniform over a finite set $\{.1, .2, \cdots, .9\}$. 
Expectations are approximated by the Monte Carlo simulation with size $100$ or greater. 

We investigated three \textit{index-based policies}: the best fixed arm policy, the myopic policy, and the Whittle index policy. 
Index-based policies compute an index for each arm only using the samples from this arm and choose the top $N$ arms. 
Due to their decoupling nature, 
these policies are computationally efficient. 
The best fixed arm policy computes the expected reward according to the stationary distribution.
The myopic policy maximizes the expected regret of the current round. 
The Whittle index policy is first introduced by \citet{whittle1988restless} and shown to be powerful in this particular setting by \citet{liu2010indexability}.
The Whittle index policy is very popular in RMABs as it can efficiently approximate the optimal policy in many different settings. 
As a remark, all these policies are reduced to the best fixed arm policy in the stationary bandits. 

\begin{figure}[ht]
\vskip 0.2in
\begin{center}
\centerline{\includegraphics[width=\columnwidth]{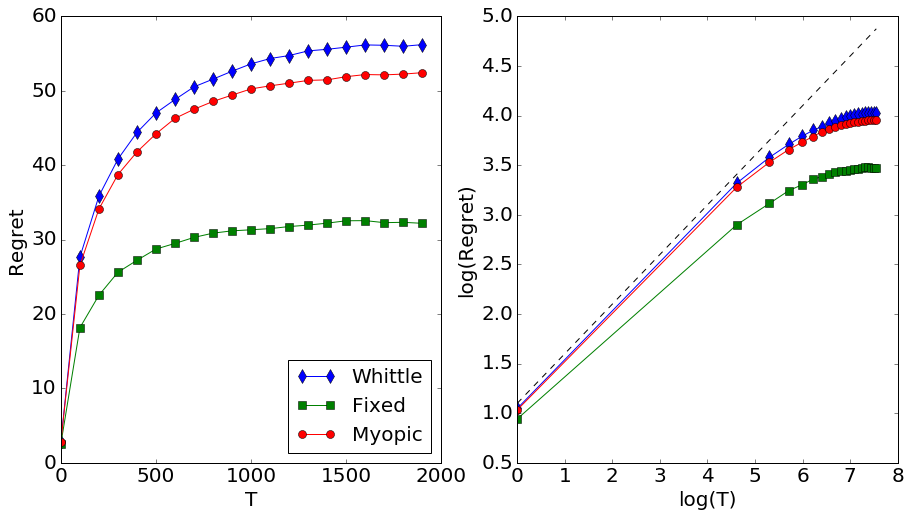}}
\caption{Bayesian regrets of TSDE (left) and their $\log$-$\log$ plots (right)}
\label{fig:bayesian}
\end{center}
\end{figure}

We first analyzed the Bayesian regret. 
Here we used $T = 2000$, $K = 8$, and $N = 3$. 
The true system $\thetaStar$ was actually drawn from the uniform prior. 
The average rewards smoothed by the prior, $\E_{\thetaStar \sim Q}J_{\piStar}(\thetaStar)$,
were $2.05$ (fixed), $2.16$ (myopic), and $2.17$ (Whittle), 
showing the power of the Whittle index policy. 
As described in Figure \ref{fig:bayesian}, 
the Bayesian regrets were sub-linear regardless of the competitor policy.
The log-log plot shows that they are indeed $\tilde{\bigO}(\sqrt{T})$ as the dotted line has a slope of $0.5$. 

\begin{figure}[ht]
\vskip 0.2in
\begin{center}
\centerline{\includegraphics[width=\columnwidth]{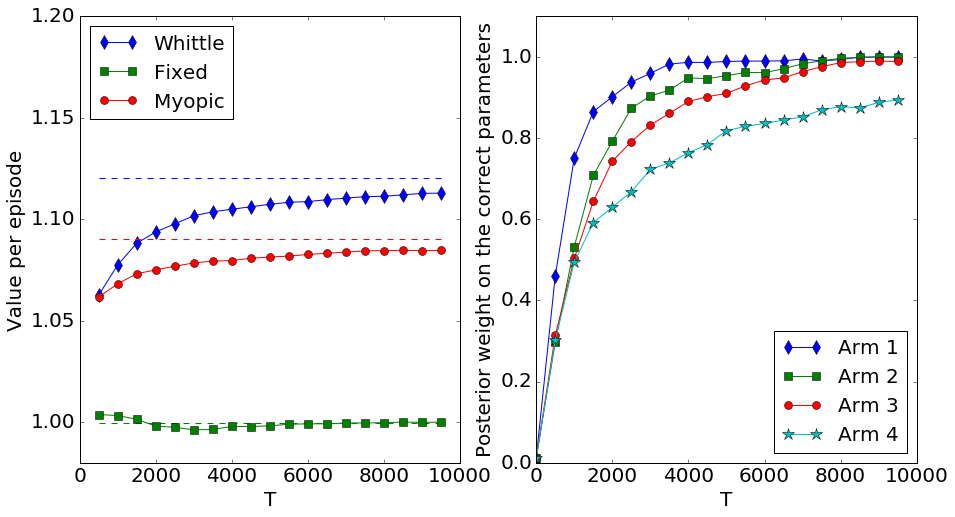}}
\caption{Average rewards of TSDE converge to their benchmarks (left); Posterior weights of the true parameters monotonically increase to one (right)}
\label{fig:freq}
\end{center}
\end{figure}

Then we tested the frequentist setting to empirically validate that TSDE still performs well in this setting even though our theory only bounds the Bayesian regret. 
We chose $T=10000$, $K=4$, $N=2$, and 
\[\{(p^{k}_{01}, p^{k}_{11})\}_{k=1, 2, 3, 4} = \{(.3, .7), (.4, .6), (.5, .5), (.6, .4)\}.
\]
We again adopted the setting from \citet{jung2019regret}. 
This $\thetaStar$ is particularly interesting because each arm has the same stationary distribution of $(.5, .5)$. 
This means that the best fixed arm policy becomes indifferent among the arms. 
The average rewards, $J_{\piStar}(\thetaStar)$,
were $1.00$ (fixed), $1.09$ (myopic), and $1.12$ (Whittle), again justifying the power of Whittle index policy. 
On the left plot of Figure \ref{fig:freq}, 
three horizontal dotted lines represent $J_{\piStar}(\thetaStar)$ for each of the competitors. 
The solid lines show the time-averaged cumulative rewards, $\frac{1}{t}\E_{\thetaStar}\sum_{\tau=1}^t r_{\thetaStar}(\xi_\tau, A_\tau)$.
Every solid line converged to the dotted line. 
The right figure plots the posterior probability of the true parameters using the Whittle index policy. 
For all arms, these probabilities monotonically increased to one, illustrating that TSDE were learning $\thetaStar$ properly. 
From this, we can assert that TSDE still performs reasonably well at least when the true parameters lie on the support of the prior. 

\subsubsection*{Acknowledgements}
AT and YJ acknowledge the support of NSF CAREER grant IIS-1452099. AT was also supported by a Sloan Research Fellowship.

\bibliography{ts_rmab} 

\clearpage
\newpage
\onecolumn
\begin{appendices}
\section{Proof of Prop.~\ref{prop:cond.1.2.bellman}}
\label{app:proof.prop.bellman}

We first prove that Cond.~\ref{cond:implicit.bellman} guarantees the constant average cost and the associated Bellman equation and then show that Cond.~\ref{cond:explicit.bellman} implies Cond.~\ref{cond:implicit.bellman}.
\begin{enumerate}
    \item Let $\theta \in \Theta$ and $\pi_\theta = \mu(\theta)$ satisfy Cond.~\ref{cond:implicit.bellman} for some bounded function $v \in \mathcal{V}$ and constant $g$. Then, for all $\xi \in S$,
    \begin{equation*}
        r_\theta(\xi,\pi_\theta(\xi)) = g + v(\xi) - \mathbb{E}_{\theta}[v(\xi^\prime) | \xi],
    \end{equation*}
    where the next "meta"-state $\xi^\prime \sim \prob_\theta( \cdot |\xi, A = \pi_\theta(\xi))$ is drawn according to the Markov transition probability of $\{\xi_t\}_{t\geq 1}$ knowing the current state $\xi$ and the action $A = \pi_\theta(\xi)$ under parametrization $\theta$. Thus,
    \begin{equation*}
    \begin{aligned}
        \sum_{t=1}^T r_\theta(\xi_t,\pi_\theta(\xi_t)) &= T g + \sum_{t=1}^T v(\xi_t) - \E_\theta[
            v(\xi_{t+1}) |\xi_t, A_t = \pi_\theta(\xi_t)
        ]\\
        &= T g + \sum_{t=1}^T v(\xi_{t+1}) - \E_\theta[
            v(\xi_{t+1}) |\xi_t, A_t = \pi_\theta(\xi_t)
        ]
        + v(\xi_1) - v(\xi_{T+1}).
    \end{aligned}
    \end{equation*}
    Multiplying by $\frac{1}{T}$ both sides of the equation and taking the expectation given $\xi_1$ leads to
    \begin{equation*}
        \frac{1}{T} \mathbb{E}_\theta \left( \sum_{t=1}^T r_\theta(\xi_t,\pi_\theta(\xi_t)) | \xi_1\right) = g +  \frac{1}{T} \mathbb{E}_\theta \Big( v(\xi_1) - v(\xi_{T+1}) | \xi_1 \Big).
    \end{equation*}
    Finally, since $v$ is bounded, letting $T \rightarrow \infty$ one has $\frac{1}{T} \mathbb{E}_\theta \Big( v(\xi_1) - v(\xi_{T+1}) | \xi_1 \Big) \rightarrow 0$ and thus 
    \begin{equation*}
        J_{\pi_\theta}(\theta) := \lim_{T \rightarrow \infty} \frac{1}{T} \mathbb{E}_\theta \left( \sum_{t=1}^T r_\theta(\xi_t,\pi_\theta(\xi_t)) | \xi_1\right) = g.
    \end{equation*}
    Futhermore, since $g$ is constant, it ensures that $J_{\pi_\theta}(\theta)$ is independent of the initial state. Replacing $g$ by $J_{\pi_\theta}(\theta)$ in Cond.~\ref{cond:implicit.bellman}, we directly obtain that $J$ is associated with the Bellman equation. 
    Since the function $v$ is arbitrary up to constant term (it still satisfies the Bellman equation and does not affect the span), we can set it without loss of generality to be non-negative defining $h_\theta(\xi) = v(\xi) - \inf_{\xi} v(\xi)$ and
    the pair $(J_{\pi_\theta}(\theta), h_\theta)$ satisfies the Bellman equation (Eq. \ref{eq:bellman}). 
    Additionally, we have $$C_\theta = \sup_{(\xi,\xi^\prime)\in S^2} h_\theta(\xi) - h_\theta(\xi^\prime) = \sup_{(\xi,\xi^\prime)\in S^2} v(\xi) - v(\xi^\prime)< \infty.$$

    \item We now show that Cond.~\ref{cond:explicit.bellman} implies Cond.~\ref{cond:implicit.bellman}. 
    The proof is adapted from~\citet[Thm.8.10.7]{puterman2014markov} which is derived for optimal policies. The core idea is to consider a sequence of discount factor $\beta_n \rightarrow 1$ and to choose an appropriate subsequence (also indexed by $n$ for ease of notation) to assert the existence of $g$ and $v \in \mathcal{V}$ thanks to the uniform boundedness of $|v^{\beta}_{\pi_\theta}|$.\\
    First, notice that for all $\xi \in S$, $r_\theta(\xi,\pi_\theta(\xi)) \in [0,N]$ and thus that $v^\beta_{\pi_\theta}(\xi) \in [0, \frac{N}{1 - \beta}]$ for all $\beta \in (0,1)$. 
    Also, it is well known that $v^\beta_{\pi_\theta}(\xi)$ satisfies the discounted Bellman equation:
    \begin{equation*}
        v^\beta_{\pi_\theta}= \mathcal{T}_\beta( v^\beta_{\pi_\theta}), \quad \text{ where } \mathcal{T}_\beta( v^\beta_{\pi_\theta} )(\xi) = r_\theta(\xi,\pi_\theta(\xi)) + \beta \mathbb{E}_\theta\big( v^\beta_{\pi_\theta}(\xi^\prime) | \xi \big).
    \end{equation*}
    Let $\bar{\xi} \in S$ be an arbitrary state and define $\bar{v}^\beta(\xi) = v^\beta_{\pi_\theta}(\xi) - v^\beta_{\pi_\theta}(\bar{\xi})$. Clearly, $\bar{v}^\beta$ is uniformly bounded and $\bar{v}^\beta$ satisfies
    \begin{equation}
        \bar{v}^\beta + (1- \beta) v^\beta_{\pi_\theta}(\bar{\xi}) = \mathcal{T}_\beta(\bar{v}^\beta).
        \label{eq:proof.bellman.1}
    \end{equation}
    Since $\bar{v}^\beta$ and $r_\theta$ are uniformly bounded, so is $(1- \beta) v^\beta_{\pi_\theta}(\bar{\xi})$. Further, the Bolzano-Weierstrass theorem for bounded sequence together with a standard diagonal argument ensures that there exists a subsequence $\beta_n \rightarrow 1$ such that 
    \begin{itemize}
        \item $(1- \beta_n) v^{\beta_n}_{\pi_\theta}(\bar{\xi}) \rightarrow g$
        \item $\bar{v}^{\beta_n}$ converges pointwise to some function $\bar{v}$.
    \end{itemize}    
    Finally, since $\sup_{(\xi,\xi^\prime)\in S^2} v^\beta_{\pi_\theta}(\xi) - v^\beta_{\pi_\theta}(\xi^\prime) = C_\theta$ is uniformly bounded so is $\bar{v}$:
    \begin{equation*}
        \sup_{(\xi,\xi^\prime)\in S^2} \bar{v}(\xi) - \bar{v}(\xi^\prime) \leq \sup_{(\xi,\xi^\prime)\in S^2} \sup_{n\geq 1} \bar{v}^{\beta_n}(\xi) - \bar{v}^{\beta_n}(\xi^\prime) \leq \sup_{n\geq 1} \sup_{(\xi,\xi^\prime)\in S^2}  \bar{v}^{\beta_n}(\xi) - \bar{v}^{\beta_n}(\xi^\prime) \leq 2 C_\theta.
    \end{equation*}
    We are now left to check that the pair $(g,\bar{v})$ satisfies the Bellman equation in Cond.~\ref{cond:implicit.bellman}. It relies on the following lemma (Lemma~3 in \citet{platzman1980optimal}).
    
    \begin{lemma}
    If $\bar{v}^{\beta_n}$ converges to $\bar{v}$ pointwise, then $\mathcal{T}_1(\bar{v}^{\beta_n})$ converges to $\mathcal{T}_1(\bar{v})$ pointwise.
    \label{le:platzman}
    \end{lemma}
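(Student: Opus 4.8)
The plan is to observe that $\mathcal{T}_1$ acts on a function only through its expectation term: since the reward $r_\theta(\xi,\pi_\theta(\xi))$ does not depend on the function being transformed, for each fixed $\xi \in S$ one has
\begin{equation*}
\mathcal{T}_1(\bar{v}^{\beta_n})(\xi) - \mathcal{T}_1(\bar{v})(\xi)
= \E_\theta\big[\,\bar{v}^{\beta_n}(\xi^\prime) - \bar{v}(\xi^\prime) \,\big|\, \xi, \pi_\theta(\xi)\big].
\end{equation*}
Thus it suffices to show that this expected difference tends to $0$ as $n \to \infty$. The integrand converges to $0$ pointwise by hypothesis, so the only real issue is justifying the exchange of limit and expectation, since the next-state $\xi^\prime$ ranges over a countably infinite space and the limit cannot be passed inside a finite sum.

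Second, I would establish a uniform dominating bound. By construction $\bar{v}^{\beta}(\xi) = v^\beta_{\pi_\theta}(\xi) - v^\beta_{\pi_\theta}(\bar{\xi})$, so $\bar{v}^{\beta}(\bar{\xi}) = 0$, and because the span of $v^\beta_{\pi_\theta}$ is uniformly bounded by $C_\theta$ (Cond.~\ref{cond:explicit.bellman}) we obtain $|\bar{v}^{\beta_n}(\xi)| \le C_\theta$ for every $\xi \in S$ and every $n$. Passing to the pointwise limit gives $|\bar{v}(\xi)| \le C_\theta$ as well, so that
\begin{equation*}
\big|\bar{v}^{\beta_n}(\xi^\prime) - \bar{v}(\xi^\prime)\big| \le 2 C_\theta \quad \text{for all } \xi^\prime \in S, \ n \ge 1.
\end{equation*}
Since $\E_\theta[\cdot \mid \xi, \pi_\theta(\xi)]$ integrates against a probability measure on $S$, the constant $2C_\theta$ is integrable and serves as the dominating function.

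Finally, I would invoke the dominated (equivalently, bounded) convergence theorem. Writing the expectation as the countable sum $\sum_{\xi^\prime} \prob_\theta(\xi^\prime \mid \xi, \pi_\theta(\xi))\,[\bar{v}^{\beta_n}(\xi^\prime) - \bar{v}(\xi^\prime)]$, each summand vanishes in the limit and the series is dominated termwise by $2C_\theta\,\prob_\theta(\xi^\prime \mid \xi, \pi_\theta(\xi))$, whose sum is the finite constant $2C_\theta$. Hence the limit passes inside the expectation and equals $0$, yielding $\mathcal{T}_1(\bar{v}^{\beta_n})(\xi) \to \mathcal{T}_1(\bar{v})(\xi)$ for every $\xi \in S$, which is the claim.

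The step I expect to be the crux is the second one — producing the uniform bound that legitimizes the interchange. Pointwise convergence alone is insufficient on a countably infinite state space, and it is precisely the uniformly bounded span guaranteed by Cond.~\ref{cond:explicit.bellman} that supplies the dominating constant; without it, probability mass could escape to infinity and the expected difference need not vanish.
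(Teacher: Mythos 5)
Your proof is correct, but it reaches the conclusion by a genuinely different route than the paper. You legitimize the limit--expectation interchange with dominated convergence, using the uniform bound $|\bar{v}^{\beta_n}(\xi)| \le C_\theta$ obtained from Cond.~\ref{cond:explicit.bellman} together with the normalization $\bar{v}^{\beta_n}(\bar{\xi})=0$. The paper never needs a dominating function: it observes that, given $\xi$ and the action $\pi_\theta(\xi)$, the elapsed-time components of the next meta-state evolve deterministically and only the newly observed states of the active arms are random, so the support $S_\xi$ of $\prob_\theta(\cdot \mid \xi, \pi_\theta(\xi))$ is a \emph{finite} set; pointwise convergence on a finite set is uniform, so the expectation converges by a direct $\epsilon$--$M$ argument. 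The trade-off is this: the paper's argument proves the lemma exactly as stated --- pointwise convergence of $\bar{v}^{\beta_n}$ alone suffices, with no boundedness hypothesis --- by exploiting the special structure of the restless-bandit kernel, whereas your argument is agnostic to that structure (it would work for any transition kernel on a countable state space) but proves a slightly weaker statement that additionally assumes the uniform bound. Since that bound is available where the lemma is invoked in the proof of Prop.~\ref{prop:cond.1.2.bellman}, your proof fully serves the paper's purpose. One nuance in your closing remark: escape of mass to infinity is indeed the generic obstruction on a countable space, but in this specific MDP it is precluded by the finite support of the kernel rather than by Cond.~\ref{cond:explicit.bellman}, so your dominating constant is sufficient here but not actually necessary.
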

    \begin{proof}
    We provide the proof for the sake of completeness. The objective is to prove that for an arbitrary fixed $\xi \in S$, $\epsilon > 0$, there exits a constant $M$ such that $|[\mathcal{T}_1(\bar{v}^{\beta_n}) -\mathcal{T}_1(\bar{v})](\xi)| < \epsilon$ for all $n \geq M$.\\
    Let $\xi \in S$ be an arbitrary state, and define $$S_\xi = \{ \xi^\prime \in S \text{ s.t. } \mathbb{P}_\theta(\xi_{t+1} = \xi^\prime | \xi_t = \xi, A_t = \pi_{\theta}(\xi)) > 0 \}.$$ Notice that since $A_t$ is fully determined by $\xi$, so is $\xi_{t+1}^n$ and $S_\xi$ is a finite non-empty set of state. Thus, there exists $M$ such that $|\bar{v}^{\beta_n}(\xi^\prime) - \bar{v}(\xi^\prime)| < \epsilon$ for all $\xi^\prime \in S_\xi$, $n \geq M$. Finally, it leads to
    \begin{equation*}
        |[\mathcal{T}_1(\bar{v}^{\beta_n}) -\mathcal{T}_1(\bar{v})](\xi)| \leq \mathbb{E}_\theta \left( | \bar{v}^{\beta_n}(\xi^\prime) - \bar{v}(\xi^\prime)| \hspace{1mm} \big | \hspace{1mm} \xi \right) \leq \max_{\xi^\prime\in S_\xi} | \bar{v}^{\beta_n}(\xi^\prime) - \bar{v}(\xi^\prime)| < \epsilon,
    \end{equation*}
    which proves the desired result.
    \end{proof}
    Finally, the uniform boundedness of $\bar{v}^{\beta_n}$ implies 
    \begin{equation*}
        |\mathcal{T}_{\beta_n}(\bar{v}^{\beta_n}) - \mathcal{T}_{1}(\bar{v}^{\beta_n})| \rightarrow 0,
    \end{equation*}
    which in addition to Lemma~\ref{le:platzman} ensures that 
    \begin{equation*}
        |\mathcal{T}_{\beta_n}(\bar{v}^{\beta_n}) - \mathcal{T}_{1}(\bar{v}) | \leq |\mathcal{T}_{\beta_n}(\bar{v}^{\beta_n}) - \mathcal{T}_{1}(\bar{v}^{\beta_n})| + |\mathcal{T}_{1}(\bar{v}^{\beta_n})- \mathcal{T}_{1}(\bar{v}) | \rightarrow 0.
    \end{equation*}
    Taking the limit in Eq.~\ref{eq:proof.bellman.1} concludes the proof
    \begin{equation*}
        \bar{v}^{\beta_n} + (1- \beta_n) v^{\beta_n}_{\pi_\theta}(\bar{\xi}) - \mathcal{T}_{\beta_n}(\bar{v}^{\beta_n}) = 0 \quad \Rightarrow \quad 
        \bar{v} + g - \mathcal{T}_1(\bar{v}) = 0.
    \end{equation*}
\end{enumerate}

\section{Regret Bound Proofs}
\label{app:regret.proofs}
In this section, we provide full proofs that are sketched in Sec. \ref{sec:regret}.
\subsection{Regret Decomposition}

Let $(\theta_i,\pi_i)$ be the sampled parameter-policy pair used in episode $i$. From Eq.~\ref{eq:bellman}, one has
\begin{align*}
\sum_{t=t_i}^{t_{i+1} - 1} 
    r_{\thetaStar}(\xi_t, A_t)
&=\sum_{t=t_i}^{t_{i+1} - 1} { 
    [r_{\theta_i} + (r_{\thetaStar} - r_{\theta_i})](\xi_t, \pi_i(\xi_t))
}\\
&= \sum_{t=t_i}^{t_{i+1} - 1} 
[
J_{\pi_i}(\theta_i) + v_{\theta_i}(\xi_t)
-\E_{\theta_i} [v_{\theta_i}(\xi^\prime) | \pi_i, \xi_t]
+ (r_{\thetaStar} - r_{\theta_i})(\xi_t, \pi_i(\xi_t))].
\end{align*}

Using this, we can rewrite the frequentist regret by
\begin{align*}
R(T;\thetaStar) 
&= J_{\piStar}(\thetaStar)\cdot T - \E_{\thetaStar}\sum_{i=1}^{M_T}\sum_{t=t_i}^{t_{i+1} - 1} r_{\thetaStar}(\xi_t, A_t) \\
&=: R_0 + R_1 + R_2 + R_3,
\end{align*}
where
\begin{align*}
R_0 
&= J_{\piStar}(\thetaStar)\cdot T 
- \E_{\thetaStar}\sum_{i=1}^{M_T}  J_{\pi_i}(\theta_i) \cdot T_i\\
R_1
&= \E_{\thetaStar}\sum_{i=1}^{M_T}\sum_{t=t_i}^{t_{i+1} - 1} 
v_{\theta_i}(\xi_{t+1}) - v_{\theta_i}(\xi_t) \\
R_2
&=\E_{\thetaStar}\sum_{i=1}^{M_T}\sum_{t=t_i}^{t_{i+1} - 1} 
\E_{\theta_i} [v_{\theta_i}(\xi^\prime) | \pi_i, \xi_t]
 - v_{\theta_i}(\xi_{t+1}) \\
R_3
&= \E_{\thetaStar}\sum_{i=1}^{M_T}\sum_{t=t_i}^{t_{i+1} - 1}
(r_{\theta_i} - r_{\thetaStar})(\xi_t, \pi_i(\xi_t)).
\end{align*}


\subsection{Confidence Set}
We begin with a useful result that is induced by Asm.~\ref{asm:mixing}.
\begin{proposition}\label{prop:mixing.times}
For any arm $k \in [K]$ and $\theta \in \Theta$, let  
$$
p_\theta(s^\prime;k,s,n) = \mathbb{P}_\theta(s^t_k = s^\prime | s^{t-n}_k = s)
$$ 
and $p_\theta(k, s, n)$ be the corresponding distribution over $S_k$.
For any $\epsilon >0$ and $n,n^\prime > \log_2(1/\epsilon) \Tmix(\frac{1}{4})$, we have
\begin{equation*}
    \|p_\theta(k,s,n) - p_\theta(k,s,n^\prime)\|_1 \leq 2 |S_k| \epsilon.
\end{equation*}
\end{proposition}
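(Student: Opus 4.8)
The plan is to compare both $p_\theta(k,s,n)$ and $p_\theta(k,s,n')$ to the stationary distribution of the passive chain and then merge the two estimates by a triangle inequality. First I would invoke Asm.~\ref{asm:mixing}: since $\Ppassive_k$ is irreducible and aperiodic on the finite state space $S_k$, it admits a unique stationary distribution $p_k$, and in the notation of Eq.~\ref{eq:Tmix.k} the quantity $p_\theta(k,s,n)$ is exactly $p^n_k(s)$, the law of the passive chain started at $s$ and evolved passively for $n$ steps under $\theta$ (an arm unobserved for $n$ rounds has evolved solely through $\Ppassive_k$). Thus the proposition reduces to controlling the distance of $p^n_k(s)$ and $p^{n'}_k(s)$ to the common limit $p_k$.

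The key quantitative input is the submultiplicativity bound $\Tmix_k(\epsilon) \le \log_2(1/\epsilon)\,\Tmix(\tfrac14)$ recorded from \citet{ortner2012regret}. I would combine it with the standard fact that the $\ell_1$ distance to stationarity is non-increasing along the chain: because $p_k \Ppassive_k = p_k$ and $\Ppassive_k$ is a stochastic matrix (hence a contraction in $\ell_1$ on zero-sum signed measures), one has $\|p^{m+1}_k(s) - p_k\|_1 \le \|p^m_k(s) - p_k\|_1$ for every $m$. Consequently, once $m \ge \Tmix_k(\epsilon)$ the distance $\|p^m_k(s) - p_k\|_1$ is and remains $\le \epsilon$, and the hypothesis $n,n' > \log_2(1/\epsilon)\,\Tmix(\tfrac14) \ge \Tmix_k(\epsilon)$ guarantees this for both exponents simultaneously.

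Finally I would assemble the bound coordinatewise to land exactly on the stated factor $|S_k|$: for each $s' \in S_k$,
\[
|p_\theta(s';k,s,n) - p_\theta(s';k,s,n')| \le |p_\theta(s';k,s,n) - p_k(s')| + |p_k(s') - p_\theta(s';k,s,n')| \le 2\epsilon,
\]
since each absolute difference is dominated by the corresponding $\ell_1$ distance, which is at most $\epsilon$ by the previous step; summing over the $|S_k|$ coordinates yields $\|p_\theta(k,s,n) - p_\theta(k,s,n')\|_1 \le 2|S_k|\epsilon$. (A direct $\ell_1$ triangle inequality in fact gives the sharper $2\epsilon$, so the stated bound certainly holds.) The only genuinely delicate point is justifying that the distance stays below $\epsilon$ for \emph{all} $n$ strictly exceeding the threshold, not merely at $\Tmix_k(\epsilon)$; this is precisely what the monotonicity step supplies, and together with the explicit mixing-time bound it closes the argument.
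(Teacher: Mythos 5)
Your overall strategy---compare both distributions to the stationary distribution $p_k$ of the passive chain via the triangle inequality, invoke $\Tmix_k(\epsilon) \le \log_2(1/\epsilon)\,\Tmix(\frac{1}{4})$, and use monotonicity of the $\ell_1$ distance to stationarity---is the same as the paper's. But your first step contains a genuine error: the identification of $p_\theta(k,s,n)$ with $p^n_k(s)$, i.e., the claim that an arm unobserved for $n$ rounds ``has evolved solely through $\Ppassive_k$.'' The last observation of arm $k$ was made at a round in which the arm was \emph{pulled} (states are observed only for active arms), so the transition out of the observed state $s$ is governed by $\Pactive_k$; only the remaining $n-1$ transitions are passive. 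The correct expression, which is the opening line of the paper's proof, is $p_\theta(k,s,n) = \big(\Ppassive_k\big)^{n-1}\Pactive_k\, e^s$, where $e^s$ is the point mass at $s$. Since Asm.~\ref{asm:mixing} imposes irreducibility and aperiodicity only on the passive chains and $\Pactive_k$ is unconstrained, your identity is false whenever $\Pactive_k \neq \Ppassive_k$, and your mixing argument as written is applied to the wrong distribution.

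The gap is local and fixable, and the fix also explains the factor $|S_k|$ you found superfluous. Write $q = \Pactive_k e^s = \sum_{s^\prime} q(s^\prime)\, e^{s^\prime}$, a convex combination of point masses. Then
\begin{equation*}
\|p_\theta(k,s,n) - p_k\|_1 \;=\; \Big\|\sum_{s^\prime \in S_k} q(s^\prime)\big( p^{\,n-1}_k(s^\prime) - p_k \big)\Big\|_1 \;\le\; \max_{s^\prime \in S_k}\|p^{\,n-1}_k(s^\prime) - p_k\|_1,
\end{equation*}
and your monotonicity observation together with $n-1 \ge \Tmix_k(\epsilon)$ bounds this by $\epsilon$; note that $n-1 \ge \Tmix_k(\epsilon)$ holds because $n$ and $\Tmix_k(\epsilon)$ are integers and $n > \log_2(1/\epsilon)\,\Tmix(\frac{1}{4}) \ge \Tmix_k(\epsilon)$, so the strict inequality in the hypothesis is doing real work here. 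The triangle inequality through $p_k$ then gives $\|p_\theta(k,s,n) - p_\theta(k,s,n^\prime)\|_1 \le 2\epsilon \le 2|S_k|\epsilon$, recovering (indeed sharpening) the claim. The paper instead bounds the smearing induced by $\Pactive_k$ crudely by $|S_k|\max_{s^\prime}(\cdot)$, which is where its factor $|S_k|$ comes from; the convexity argument above shows it is unnecessary. One point in your favor: your explicit justification that $\|p^m_k(s)-p_k\|_1$ is non-increasing in $m$ (a stochastic matrix is an $\ell_1$ contraction and $p_k$ is a fixed point) is something the paper's proof silently relies on when it passes from the infimum defining $\Tmix_k(\epsilon)$ to all larger times, so keep that step in a corrected write-up.
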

\begin{proof}
For any $n \geq 1$, we can write
$$
p_\theta(k,s,n) = \big(P^{passive}_k\big)^{n-1} P^{active} e^s,
$$
where $e^s$ is a binary vector of size $|S_k|$ with $1$ on the $s$ entry and $0$ elsewhere. 
We can deduce
\begin{equation*}
\begin{aligned}
     \|p_\theta(k,s,n) - p_\theta(k,s,n^\prime)\|_1 
     &\leq |S_k| \max_{s \in S_k} \|\Big(\big(P^{passive}_k\big)^{n-1} - \big(P^{passive}_k\big)^{n^\prime -1}\Big)e^s\|_1 \\
     &\leq 2 |S_k| \max_{s \in S_k} \|p_k^{T^{mix}}(s) - p_k\|_1 \\
     &\leq 2 |S_k|\epsilon,
\end{aligned}
\end{equation*}

where we used the fact $\Tmix_k(\epsilon) \leq \log_2(1/\epsilon) \Tmix(\frac{1}{4})$
, discussed by \citet[Eq. 1]{ortner2012regret}.

\end{proof}

Now we prove Lemma \ref{lemma:onpolicyerror}.

\newtheorem*{new.lemma:onpolicyerror}{Lemma~\ref{lemma:onpolicyerror}}
\begin{new.lemma:onpolicyerror}
On the high-probability event $\thetaStar \in \cap_{i\leq M_T} \Theta_i$, we can show
\begin{equation*}
\Delta_T \leq 12
\sqrt{
    N\Tmix T \log 1/\delta
}
\sum_{k=1}^K |S_k|.
\end{equation*}
\end{new.lemma:onpolicyerror}
\begin{proof}
We work on the high-probability event where $\thetaStar \in \Theta_i$ for all $i\leq M_T$. 
Thus, from Lemma~\ref{lemma:confidenceProb} we have 
$||(\hat{p}_{t_i} - p_{\thetaStar})(\zeta)||_1 \le c_i(\zeta)$ for all $\zeta$. Hence, we obtain
\begin{equation*}
    \Delta_T \le \sum_{i=1}^{M_T}\sum_{t=t_i}^{t_{i+1} - 1}
\sum_{\text{active arms } k}{
    c_i (k, \sigma^t_k, n^t_k)}.
\end{equation*}
By the second stopping criterion of TSDE, we have $N_t(\zeta) \le 2 N_{t_i}(\zeta)$ for all $t$ in episode $i$. Using this, we can write
\begin{align*}
\sum_{i=1}^{M_T}\sum_{t=t_i}^{t_{i+1} - 1} 
\sum_{\text{active arms } k}{
    c_i (k, \sigma^t_k, n^t_k) 
} 
&\le
\sum_{i=1}^{M_T}\sum_{t=t_i}^{t_{i+1} - 1} 
\sum_{\text{active arms } k}{
    \sqrt{
        \frac{
            16 |S_k| \log 1/\delta
        }{
            1 \vee N_{t}(k, \sigma^t_k, n^t_k)
        }
    }
}\\
&=
\sum_{k=1}^K{
    \sumtt{
        \ind(A_{t, k} = 1)
        \sqrt{
            \frac{
                16 |S_k| \log 1/\delta
            }{
                1 \vee N_{t}(k, \sigma^t_k, n^t_k)
            }
        }
    }
}.
\end{align*}
For each $\zeta = (k, s, n)$, it appears in the above summation exactly $N_{T+1}(\zeta)$ times. That is to say, the above equation can be written as
\begin{equation}
\label{eq:r2_4}    
\sum_{\zeta \in Z}{
    \sqrt{
        16|S_k| \log 1/\delta
    }
    \cdot
    \sum_{j=1}^{N_{T+1}(\zeta)}{
        \frac{1}{\sqrt{1 \vee (j-1)}}
    }
} \le
\sum_{\zeta \in Z}{
    12\sqrt{
        |S_k|N_{T+1}(\zeta) \log 1/\delta
    }
}.
\end{equation}
The number of $\zeta = (k, s, n)$ for a fixed $k$ is bounded by
$|S_k|\Tmix$.
Also, we have $\sum_{\zeta\in Z} N_{T+1}(\zeta) = NT$. 
The Cauchy-Schwartz inequality provides 
\begin{align*}
\sum_{s \in S_k} \sum_{n=1}^{\Tmix}{
    \sqrt{N_{T+1}(k, s, n)}
}
&\le
\sqrt{
    |S_k|\Tmix NT
}.
\end{align*}
Finally, we obtain
\begin{equation*}
    \Delta_T \le 12
\sqrt{
    N\Tmix T \log 1/\delta
}
\sum_{k=1}^K |S_k|.
\end{equation*}
\end{proof}

\subsection{Bounding $R_0$ and $R_1$}
\newtheorem*{new.lemma:r0}{Lemma~\ref{lemma:r0}}
\begin{new.lemma:r0}[\citet{ouyang2017learning}, Lemma~3 and 4]
\begin{equation*}
    \E_{\thetaStar \sim Q} R_0 \le N \cdot \E_{\thetaStar \sim Q} M_T,
\end{equation*}
where $M_T$ is the total number of episodes until time $T$.
\end{new.lemma:r0}
\begin{proof}
By definition in Eq.~\ref{eq:averageReward},
we have $0 \le J_\pi(\theta) \le N$ for all $\pi$ and $\theta$.
For ease of analysis, 
let us write
\[
J_{\piStar}(\thetaStar) 
=
N - J^\star
\text{ and }
J_{\pi_i}(\theta_i)
=
N - J_i.
\]
Since $M_T \le T$ almost surely, we can rewrite 
\begin{align*}
R_0
&=
J_{\piStar}(\thetaStar)\cdot T 
- \E_{\thetaStar}\sum_{i=1}^{T}{
    \ind(t_i \le T) J_{\pi_i}(\theta_i) \cdot T_i
}\\
&=
\E_{\thetaStar}\sum_{i=1}^{T}{
    \ind(t_i \le T) J_i \cdot T_i
}
-J^\star \cdot T 
.
\end{align*}
Due to the first stopping criterion of TSDE, we have 
$T_i \le T_{i-1}+1$ for all $i$. 
Using this, we can deduce
\begin{align*}
R_0 
\le 
\E_{\thetaStar}\sum_{i=1}^{T}{
    \ind(t_i \le T) J_i \cdot (T_{i-1}+1)
}
-J^\star \cdot T 
.
\end{align*}
In the meantime, note that 
$\ind(t_i \le T) J_i \cdot (T_{i-1}+1)$
is a $\Hc_{t_i}$-measurable function of $\theta_i$. 
Thus Lemma \ref{lemma:expectationIdentity} implies 
\begin{align*}
\E \ind(t_i \le T) J_i \cdot (T_{i-1}+1)
=
\E \ind(t_i \le T) J^\star \cdot (T_{i-1}+1).
\end{align*}
Using this, we obtain
\begin{align*}
\E_{\thetaStar \sim Q} R_0
&\le 
\E \sum_{i=1}^{T}{
    \ind(t_i \le T) J^\star \cdot (T_{i-1}+1)
}
-J^\star \cdot T 
= \E J^\star \cdot M_T.
\end{align*}
Since $J^\star \le N$ almost surely, this completes the proof.
\end{proof}

\newtheorem*{new.lemma:r1}{Lemma~\ref{lemma:r1}}
\begin{new.lemma:r1}
\begin{equation*}
R_1 \le H \cdot \E M_T.
\end{equation*}
\end{new.lemma:r1}
\begin{proof}
For a fixed episode $i$, the telescope rule gives
\begin{align*}
\sum_{t=t_i}^{t_{i+1} - 1} 
v_{\theta_i}(\xi_{t+1}) - v_{\theta_i}(\xi_t)
=
v_{\theta_i}(\xi_{t_{i+1}}) - v_{\theta_i}(\xi_{t_{i}}),
\end{align*}
which is less than $H$ by the assumption. 
Summing over the episodes concludes the argument.
\end{proof}

\subsection{Bounding $R_2$ and $R_3$.}

Before delving into bounding $R_2$ and $R_3$, we record a technical lemma, which generalizes Lemma $7$ by \citet{jung2019regret}.

\begin{lemma}
\label{lemma:technical}
Suppose $a_k$ and $b_k$ are probability distributions over a set 
$[n_k]$ for $k \in [K]$.
Then we have
\[
\sum_{x \in \otimes_{k=1}^K [n_k]}{
    |\prod_{k=1}^K a_{k, x_k} - \prod_{k=1}^K b_{k, x_k}|
}
\le 
\sum_{k=1}^K ||a_k - b_k||_1 .
\]
\end{lemma}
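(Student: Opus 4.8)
\textbf{Proof plan for Lemma~\ref{lemma:technical}.}

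The statement asserts that the $L_1$ distance between two product distributions is controlled by the sum of the $L_1$ distances between their factors. The natural approach is a telescoping (hybrid) argument over the $K$ coordinates: I would interpolate between $\prod_k a_{k,x_k}$ and $\prod_k b_{k,x_k}$ by swapping one factor at a time. Concretely, for $j = 0, 1, \dots, K$ define the hybrid product that uses $b$ in the first $j$ coordinates and $a$ in the remaining ones, so that $j=0$ recovers $\prod_k a_{k,x_k}$ and $j=K$ recovers $\prod_k b_{k,x_k}$. Writing the difference $\prod_k a_{k,x_k} - \prod_k b_{k,x_k}$ as the telescoping sum of consecutive hybrids and applying the triangle inequality inside the outer sum over $x$, the whole quantity is bounded by $\sum_{j=1}^K$ of the $L_1$ distance between the $j$-th and $(j{-}1)$-th hybrids.

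The key simplification is that two consecutive hybrids differ only in the $j$-th factor: one has $a_{j,x_j}$, the other $b_{j,x_j}$, while all other factors agree. Thus each consecutive difference factors as
\begin{equation*}
\Big(\prod_{k<j} b_{k,x_k}\Big)\,(a_{j,x_j} - b_{j,x_j})\,\Big(\prod_{k>j} a_{k,x_k}\Big).
\end{equation*}
When I sum its absolute value over all $x \in \otimes_k [n_k]$, the sum factorizes across coordinates. The coordinates $k \neq j$ each contribute $\sum_{x_k} a_{k,x_k} = 1$ or $\sum_{x_k} b_{k,x_k} = 1$ since $a_k$ and $b_k$ are probability distributions, while coordinate $j$ contributes exactly $\sum_{x_j} |a_{j,x_j} - b_{j,x_j}| = \|a_j - b_j\|_1$. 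Hence each telescoping term equals $\|a_j - b_j\|_1$, and summing over $j$ yields the claimed bound $\sum_{k=1}^K \|a_k - b_k\|_1$.

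I expect the argument to be essentially routine; the only point requiring care is the bookkeeping in the hybrid definition and confirming that the marginal sums over the unchanged coordinates are genuinely $1$ (which is exactly where the probability-distribution hypothesis is used, and also why no dependence on the $n_k$ survives). The main conceptual step — the one obstacle worth flagging — is recognizing that the outer $L_1$ sum over the full product space decouples after fixing the single coordinate that changes; once the telescoping is set up correctly, the factorization is immediate and Fubini-style interchange of the finite sums needs no justification beyond finiteness.
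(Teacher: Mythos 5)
Your proposal is correct and is essentially the same argument as the paper's: the paper performs the identical telescoping (peeling off one factor at a time via the triangle inequality for each fixed $x$, yielding terms of the form $\bigl(\prod_{j<k} a_{j,x_j}\bigr)\,|a_{k,x_k}-b_{k,x_k}|\,\bigl(\prod_{j>k} b_{j,x_j}\bigr)$) and then sums over $x$, using that the unchanged factors are probability distributions so their marginal sums equal $1$. The only difference is presentational — the paper writes the telescoping recursively rather than via explicitly named hybrids.
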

\begin{proof}
Fix a vector $x$. 
For simplicity,
let $\alpha_k = a_{k, x_k}$,
$\beta_k = b_{k, x_k}$,
and $\delta_k = |\alpha_k - \beta_k|$.
We may write
\begin{align*}
|\prod_{k=1}^K \alpha_k - \prod_{k=1}^K \beta_k| 
&\le 
(\prod_{k=1}^{K-1} \alpha_k )|\alpha_K - \beta_K|
+
|\prod_{k=1}^{K-1} \alpha_k - \prod_{k=1}^{K-1} \beta_k|\beta_K\\
&=
(\prod_{k=1}^{K-1} \alpha_k )\delta_K
+
|\prod_{k=1}^{K-1} \alpha_k - \prod_{k=1}^{K-1} \beta_k|\beta_K\\
&\le \cdots \\
&\le
\sum_{k=1}^K{
    (\prod_{j=1}^{k-1}\alpha_j)
    \delta_k
    (\prod_{j=k+1}^{K}\beta_j)
} \\
&=
\sum_{k=1}^K{
    (\prod_{j=1}^{k-1}a_{j, x_j})
    |a_{k, x_k} - b_{k, x_k}|
    (\prod_{j=k+1}^{K}b_{j, x_j})
}.
\end{align*}
When summing the last term for all possible vectors $x$, 
the coefficient of 
$|a_{k, x_k} - b_{k, x_k}|$
becomes $1$ because $a_k$ and $b_k$ are probability distributions.
Then we get the desired inequality.
\end{proof}

\newtheorem*{new.lemma:r2}{Lemma~\ref{lemma:r2}}
\begin{new.lemma:r2}
$R_2$ satisfies the following bound
\[
R_2 \le 28H \sum_{k=1}^K |S_k|
\sqrt{
    N\Tmix T \log (\Tmix T)
}.
\]
\end{new.lemma:r2}
\begin{proof}
In episode $i$, $\xi_{t+1}$ evolves from $\xi_t$ on the system $\thetaStar$ with the action $A_t = \pi_i(\xi_t)$.
From this, we can rewrite
\begin{align*}
R_2
&=
\E_{\thetaStar}\sum_{i=1}^{M_T}\sum_{t=t_i}^{t_{i+1} - 1} 
(\E_{\theta_i} - \E_{\thetaStar}) [
    v_{\theta_i}(\xi^\prime) | \pi_i, \xi_t
].
\end{align*}
Since $|v_{\theta_i}(\xi^\prime)|\le H$, the individual difference becomes
\begin{align*}
\begin{split}
\sum_{\xi^\prime \in S}& { 
    (
        \prob_{\pi_i(\xi_t)}(\xi_t, \xi^\prime | \theta_i)
        -
        \prob_{\pi_i(\xi_t)}(\xi_t, \xi^\prime | \thetaStar)
    )
    v_{\theta_i}(\xi^\prime)
}\le
H\sum_{\xi^\prime \in S} { 
    |
        \prob_{\pi_i(\xi_t)}(\xi_t, \xi^\prime | \theta_i)
        -
        \prob_{\pi_i(\xi_t)}(\xi_t, \xi^\prime | \thetaStar)
    |
}.
\end{split}
\end{align*}
Once the action $\pi_i(\xi_t)$ is fixed, 
$\xi^n_t = (n^t_1, \cdots, n^t_K)$ evolves in a deterministic manner.
Only $\sigma^t_k$ for the active arms $k$ will be updated. 
Then we may write
\begin{align*}
\prob_{\pi_i(\xi_t)}(\xi_t, \xi^\prime | \theta)
=
\prod_{\text{active arms }k}{
    p_\theta(\sigma^\prime_k ; k, \sigma^t_k, n^t_k)
},
\end{align*}
where $p_\theta(k, s, n)$ is defined earlier in the section.
Using Lemma \ref{lemma:technical}, we obtain
\begin{align}
\begin{split}
\label{eq:r2_2}
\sum_{\xi^\prime \in S} &{ 
    |
        \prob_{\pi_i(\xi_t)}(\xi_t, \xi^\prime | \theta_i)
        -
        \prob_{\pi_i(\xi_t)}(\xi_t, \xi^\prime | \thetaStar)
    |
}\le
\sum_{\text{active arms } k}{
    ||
        (p_{\theta_i} - p_{\thetaStar})(k, \sigma^t_k, n^t_k)
    ||_1.
}
\end{split}
\end{align}
If $\thetaStar, \theta_i \in \Theta_i$, we can apply Lemma~\ref{lemma:onpolicyerror} to upper bound the cumulative sum. If not, the entire summation is bounded by $2$. 
From these and Lemma~\ref{lemma:confidenceProb} and~\ref{lemma:onpolicyerror}, we obtain 
\begin{align}
\begin{split}
\label{eq:r2_3}
&~~~R_2
\le
H (R_{2}^0  + R_{2}^1),
\text{ where}\\
R_{2}^0
&= 24
\sqrt{
    N\Tmix T \log 1/\delta
}
\sum_{k=1}^K |S_k| , \\
R_{2}^1
&=
4\delta \Tmix T \sum_{k=1}^K |S_k|.
\end{split}
\end{align}
We finish the proof by setting $\delta = \frac{1}{\Tmix T}$ in Eq.~\ref{eq:r2_3}.
\end{proof}

\newtheorem*{new.lemma:r3}{Lemma~\ref{lemma:r3}}
\begin{new.lemma:r3}
$R_3$ satisfies the following bound
\[
R_3 \le 28 \sum_{k=1}^K |S_k|
\sqrt{
    N\Tmix T \log (\Tmix T)
}.
\]
\end{new.lemma:r3}
\begin{proof}
We begin by investigating the individual term
\begin{align*}
(r_{\theta_i} - r_{\thetaStar})(\xi_t, \pi_i(\xi_t))
&=
\sum_{\text{active arms }k}{
    (\E_{\theta_i} - \E_{\thetaStar})[
        r_k(s^t_k)
        |
        \xi_t, \pi_i(\xi_t)
    ]
}\\
&=
\sum_{\text{active arms }k}{
    \sum_{s^\prime \in S_k}{
        r_k(s^\prime)
        (p_{\theta_i} - p_{\thetaStar})(s^\prime ; k, \sigma^t_k, n^t_k)
    }
}\\
&\le
\sum_{\text{active arms }k}{
    \sum_{s^\prime \in S_k}{
        ||
            (p_{\theta_i} - p_{\thetaStar})(k, \sigma^t_k, n^t_k)
        ||_1
    }
},
\end{align*}
where the last inequality holds by the assumption $r_k(s_k) \le 1$.
The last term actually appears in Eq. \ref{eq:r2_2} from the proof of Lemma \ref{lemma:r2}, 
and we can use the same argument to obtain the desired bound.
\end{proof}

\end{appendices}

\end{document}